\icmltitlerunning{Generalization Error of GLMs in High Dimensions}
\def\ts{{\rm ts}}
\def\tr{{\rm tr}}
\renewcommand{\hat}{\widehat}
\def\beq{\begin{equation}}
\def\eeq{\end{equation}}
\def\beqa{\begin{eqnarray}}
\def\eeqa{\end{eqnarray}}
\def\beqan{\begin{eqnarray*}}
\def\eeqan{\end{eqnarray*}}
\def\half{{\tfrac{1}{2}}}
\def\R{{\mathbb{R}}}
\DeclareMathOperator*{\argmin}{argmin}
\DeclareMathOperator{\diag}{Diag}
\DeclareMathOperator{\prox}{prox} %
\def\mp{{\rm{mp}}}
\newcommand\independent{\protect\mathpalette{\protect\independenT}{\perp}}
\def\independenT#1#2{\mathrel{\rlap{$#1#2$}\mkern2mu{#1#2}}}
\def\inv{^{-1}}
\newtheorem{proposition}{Proposition}
\newtheorem{theorem}{Theorem}
\newtheorem{lemma}{Lemma}
\newtheorem{corollary}{Corollary}
\theoremstyle{definition}
\newtheorem{definition}{Definition}
\newtheorem{assumption}{Assumption}
\def\phat{\wh{p}}
\def\zhat{\widehat{z}}
\def\what{\widehat{w}}
\def\arr{\rightarrow}
\def\Exp{\mathbb{E}}
\def\Prob{\mathbb{P}}
\def\Cov{\mathrm{Cov}}
\def\alphabar{\overline{\alpha}}
\def\gammabar{\overline{\gamma}}
\def\PL2{\stackrel{PL(2)}{=}}
\def\kp{k\! + \!}
\def\lp{\ell\! + \!}
\def\lm{\ell\! - \!}
\def\Lm{L\! - \!}
\newcommand{\zero}{\mathbf{0}}
\newcommand{\dbf}{\mathbf{d}}
\newcommand{\fbf}{\mathbf{f}}
\newcommand{\gbf}{\mathbf{g}}
\newcommand{\pbf}{\mathbf{p}}
\newcommand{\pbfhat}{\widehat{\mathbf{p}}}
\newcommand{\qbf}{\mathbf{q}}
\newcommand{\rbf}{\mathbf{r}}
\newcommand{\sbf}{\mathbf{s}}
\newcommand{\ubf}{\mathbf{u}}
\newcommand{\wbfhat}{\wh{\wbf}}
\newcommand{\xbf}{\bm{x}}
\newcommand{\zbf}{\mathbf{z}}
\newcommand{\zbfhat}{\widehat{\mathbf{z}}}
\newcommand{\Hbf}{\mathbf{H}}
\newcommand{\Ibf}{\mathbf{I}}
\newcommand{\Kbf}{\mathbf{K}}
\newcommand{\Pbf}{\mathbf{P}}
\newcommand{\Phat}{\wh{P}}
\newcommand{\Sbf}{\mathbf{S}}
\newcommand{\Ubf}{\mathbf{U}}
\newcommand{\Vbf}{\mathbf{V}}
\newcommand{\What}{\wh{W}}
\newcommand{\Xbf}{\mathbf{X}}
\newcommand{\Zhat}{\wh{Z}}
\newcommand{\inner}[1]{\langle{#1}\rangle}
\def\Sigmabf{{\boldsymbol \Sigma}}
\def\xibf{{\boldsymbol \xi}}
\newcommand{\thetabar}{{\overline{\theta}}}
\newcommand{\phibf}{{\bm{\phi}}}
\newcommand{\indic}[1]{\mathbbm{1}_{ \{ {#1} \} }}
\newcommand{\tran}{^{\text{\sf T}}}
\def\PLeq{\stackrel{PL(2)}{=}}
\def\Norm{{\mathcal N}}
\def\alphabar{\overline{\alpha}}
\newcommand{\bkt}[1]{{\left< #1 \right>}}
\providecommand{\black}[1]{\textcolor{black}{#1}}
\providecommand{\old}[1]{ }
\providecommand{\Var}{{\rm Var}}
\providecommand{\mc}{\mathcal}
\providecommand{\ie}{{\rm i.e.}}
\providecommand{\wb}{\overline}
\providecommand{\wh}{\widehat}
\newcommand{\norm}[1]{\left\|#1\right\|}
\providecommand{\Real}{\mathbb{R}}
\providecommand{\i}{\bm{i}}
\providecommand{\p}{\mathbf{p}}
\providecommand{\y}{\bm{y}}
\providecommand{\dbf}{\mathbf{d}}
\providecommand{\fbf}{\mathbf{f}}
\providecommand{\gbf}{\mathbf{g}}
\providecommand{\pbf}{\mathbf{p}}
\providecommand{\qbf}{\mathbf{q}}
\providecommand{\rbf}{\mathbf{r}}
\providecommand{\sbf}{\mathbf{s}}
\providecommand{\ubf}{\mathbf{u}}
\providecommand{\wbf}{\bm{w}}
\providecommand{\xbf}{\bm{x}}
\providecommand{\ybf}{\bm{y}}
\providecommand{\zbf}{\mathbf{z}}
\providecommand{\Hbf}{\mathbf{H}}
\providecommand{\Ibf}{\mathbf{I}}
\providecommand{\Kbf}{\mathbf{K}}
\providecommand{\Mbf}{\mathbf{M}}
\providecommand{\Pbf}{\mathbf{P}}
\providecommand{\Sbf}{\mathbf{S}}
\providecommand{\Ubf}{\mathbf{U}}
\providecommand{\Vbf}{\mathbf{V}}
\providecommand{\Xbf}{\mathbf{X}}
\providecommand{\phibf}{\mbf{\phi}}
\providecommand{\mcN}{\mathcal{N}}
\providecommand{\indic}[1]{\mathbbm{1}_{ \{ {#1} \} }}
\providecommand{\ignore}[1]{}
\begin{document}

\twocolumn[
\icmltitle{Generalization Error of Generalized Linear Models in High Dimensions}

\icmlsetsymbol{equal}{*}

\begin{icmlauthorlist}
\icmlauthor{Melikasadat Emami}{ucla}
\icmlauthor{Mojtaba Sahraee-Ardakan}{ucla}
\icmlauthor{Parthe Pandit}{ucla}
\icmlauthor{Sundeep Rangan}{nyu}
\icmlauthor{Alyson K. Fletcher}{ucla,uclastat}
\end{icmlauthorlist}

\icmlaffiliation{ucla}{Department of Electrical and Computer Engineering, University of California, Los Angeles, Los Angeles, USA}
\icmlaffiliation{uclastat}{Department of Statistics, University of California, Los Angeles, Los Angeles, USA}

\icmlaffiliation{nyu}{Department of Electrical and Computer Engineering, New York University, Brooklyn, New York, USA}

\icmlcorrespondingauthor{Melikasadat Emami}{emami@ucla.edu}
\icmlcorrespondingauthor{Mojtaba Sahraee-Ardakan}{msahraee@ucla.edu}
\icmlcorrespondingauthor{Parthe Pandit}{parthepandit@ucla.edu}
\icmlcorrespondingauthor{Sundeep Rangan}{srangan@nyu.edu}
\icmlcorrespondingauthor{Alyson K. Fletcher}{akfletcher@ucla.edu}

\icmlkeywords{Machine Learning, Generalization error, Double Descent, }
\vskip 0.3in
]
\printAffiliationsAndNotice{}

\begin{abstract}
At the heart of machine learning lies the question of generalizability of learned rules over previously unseen data.  While over-parameterized models based on neural networks are now ubiquitous in machine learning applications, our understanding of their generalization capabilities is incomplete.  This task is made harder by the non-convexity of the underlying learning problems.  
We provide a general framework to characterize the asymptotic generalization error for single-layer neural networks (i.e., generalized linear models) with arbitrary non-linearities,
making it applicable to regression as well as classification problems.
This framework enables analyzing the effect of (i) over-parameterization and non-linearity during modeling; and (ii) choices of loss function, initialization, and regularizer during learning.
Our model also captures mismatch between training and test distributions.
As examples, we analyze a few special cases, namely linear regression and logistic regression.  We are also able to rigorously and analytically explain the \emph{double descent} phenomenon in generalized linear models.  
\end{abstract}

\section{Introduction} \label{sec:intro}

A fundamental goal of machine learning is 
\emph{generalization}:   the ability to  draw 
inferences about unseen data from finite training examples.  
Methods to quantify
the generalization error are therefore critical in 
assessing the performance of any machine learning approach.

This paper seeks to characterize 
the generalization error for a class of
generalized linear models (GLMs) of the form
\begin{equation} \label{eq:glm}
    y = \phi_{\rm out}(\bkt{\xbf, \wbf^0}, d),
\end{equation} 
where $\xbf \in \R^p$ is a vector of input features, 
$y$ is a scalar output, 
$\wbf^0 \in \R^p$ are weights to be learned, $\phi_{\rm out}(\cdot)$
is a known link function, and $d$ is random noise.
The notation $\bkt{\xbf,\wbf^0}$ denotes an inner product.
We use the superscript ``$0$" to denote the ``true" values
in contrast to estimated or postulated quantities.
The output may be continuous or discrete to model either
regression or classification problems.  

We measure the generalization error in a standard manner:
we are given training
data $(\xbf_i,y_i)$, $i=1,\ldots,N$ from which we learn some parameter
estimate $\wbfhat$ via a regularized empirical risk minimization of the form
\beq  \label{eq:whatmin}
    \wbfhat = \argmin_{\wbf} 
      F_{\rm out}(\ybf,\Xbf\wbf) + F_{\rm in}(\wbf),
\eeq
where $\Xbf=[\xbf_1\,\xbf_2\,\ldots\,\xbf_N]\tran$, is the data matrix, $F_{\rm out}$ is some output 
loss function,
and $F_{\rm in}$ is some regularizer on the weights.
We are then given a new test sample, $\xbf_{\rm ts}$,
for which the true and predicted values
are given by
\beq \label{eq:ytest}
    y_{\rm ts} = \phi_{\rm out}(\bkt{\xbf_{\rm ts},\wbf^0},d_{\rm ts}), \quad
    \wh{y}_{\rm ts} = \phi(\bkt{\xbf_{\rm ts},\wbfhat}),
\eeq
where $d_{\rm ts}$ is the noise in the test sample, and
$\phi(\cdot)$ is a postulated inverse link function that may be
different from the true function $\phi_{\rm out}(\cdot)$.
The generalization error is then defined as the expectation of some expected loss between $y_{\rm ts}$ and $\wh{y}_{\rm ts}$
of the form
\beq \label{eq:gen_nolim}
    \Exp\,f_\ts(y_\ts,\wh y_\ts),
\eeq
for some test loss function $f_\ts(\cdot)$
such as squared error or prediction error.

Even for this relatively simple GLM model,
the behavior of the
generalization error is not fully understood.
Recent works \cite{montanari2019generalization,deng2019model,mei2019generalization,salehi2019impact} have 
characterized the
generalization error of various linear 
models for classification and regression 
in certain large random problem instances.
Specifically, the 
number of samples $N$ and
number of features $p$ both grow without bound
with their ratio satisfying
${p}/{N} \rightarrow \beta \in (0,\infty)$,
and the samples in the training data $\xbf_i$ 
are drawn randomly.  In this limit, the
generalization error can be exactly computed.
The analysis can explain the so-called 
\emph{double descent} phenomena \cite{belkin2019reconciling}:  
in highly under-regularized settings, the test error may
initially \emph{increase} with the number of data samples $N$
before decreasing.  
See the prior work section below for more details.

\paragraph*{Summary of Contributions.}
Our main result (Theorem~\ref{thm:thm1_complete}) 
provides a procedure for exactly computing
the asymptotic value of the
generalization error \eqref{eq:gen_nolim} for
GLM models
in a certain random high-dimensional regime called the Large
System Limit (LSL).
The procedure enables the generalization error to be
related to 
key problem parameters including the sampling ratio
$\beta=p/N$,
the regularizer, the output function, 
and the distributions of the true weights and noise.
Importantly, our result holds under very general 
settings including:
\begin{enumerate*}[label=(\roman*)]
\item arbitrary test metrics $f_\ts$; 
\item arbitrary training loss functions $F_{\rm out}$ as well as decomposable regularizers $F_{\rm in}$;
\item arbitrary link functions $\phi_{\rm out}$;
\item correlated covariates $\xbf$;
\item underparameterized ($\beta<1$) and overparameterized regimes ($\beta>1$); and
\item distributional mismatch in training and test data.
\end{enumerate*}
Section \ref{sec:main_result} discusses in detail the 
general assumptions on the quantities
$f_\ts$, $F_{\rm out}$, $F_{\rm in}$, and $\phi_{\rm out}$
under which
Theorem~\ref{thm:thm1_complete} holds.

\paragraph*{Prior Work.}
Many recent works 
characterize generalization error of various machine
learning models, including special cases
of the GLM model considered here.
For example, 
the precise characterization for asymptotics of prediction error for least squares regression has been provided in \cite{belkin2019two, hastie2019surprises, muthukumar2019harmless}. The former confirmed the double descent curve of \cite{belkin2019reconciling}
under a Fourier series model and a noisy Gaussian model for data in the over-parameterized regime.
The latter also obtained this scenario under both linear and non-linear feature models for ridge regression and min-norm least squares using random matrix theory. Also, \cite{advani2017high} studied the same setting for deep linear and shallow non-linear networks. 

The analysis of the the generalization for max-margin linear classifiers in the high dimensional regime has been done in \cite{montanari2019generalization}. The exact expression for asymptotic prediction error is derived and in a specific case for two-layer neural network with random first-layer weights, the double descent curve was obtained. A similar double descent curve for logistic regression as well as linear discriminant analysis has been reported by \cite{deng2019model}.
Random feature learning in the same setting has also been studied for ridge regression in \cite{mei2019generalization}. The authors have, in particular, shown that highly over-parametrized estimators with zero training error are statistically optimal at high signal-to-noise ratio (SNR).
The asymptotic performance of regularized logistic regression in high dimensions is studied in \cite{salehi2019impact} using the Convex Gaussian 
Min-max Theorem in the under-parametrized regime. 
The results in the current paper
can consider all these models as special cases.
Bounds on the generalization error of over-parametrized linear models are also given in \cite{bartlett2019benign, neyshabur2018towards}. %

Although this paper and several other recent works
consider only simple linear models and GLMs, 
much of the motivation
is to understand generalization in deep neural networks
where classical intuition
may not hold \cite{belkin2018understand,zhang2016understanding,neyshabur2018towards}.
In particular,
a number of recent papers have shown the connection between neural networks in the over-parametrized regime and kernel methods.  The works \cite{daniely2017sgd, daniely2016toward} showed that gradient descent on over-parametrized neural networks learns a function in the RKHS corresponding to the random feature kernel. Training dynamics of overparametrized neural networks has been studied by \cite{jacot2018neural, du2018gradient, arora2019fine,allen2019learning},
and it is shown that the function learned is in an RKHS corresponding to the neural tangent kernel.

\paragraph*{Approximate Message Passing.}
Our key tool to study the generalization error is
approximate message passing (AMP), a class of inference
algorithms originally developed in \cite{donoho2009message,DonohoMM:10-ITW1,BayatiM:11} for compressed sensing.  We show that the learning problem
for the GLM can be formulated as an inference 
problem on a certain multi-layer network.
Multi-layer AMP methods  \cite{he2017generalized,manoel2018approximate,fletcher2018inference,pandit2019inference} 
can then be applied to perform the inference.  The specific
algorithm we use in this work
is the multi-layer vector AMP (ML-VAMP)
algorithm of \cite{fletcher2018inference,pandit2019inference}
which itself builds on several works 
\cite{opper2005expectation,fletcher2016expectation,rangan2019vamp,cakmak2014samp,ma2017orthogonal}.  The ML-VAMP algorithm
is not necessarily the most computationally efficient
procedure for the minimization \eqref{eq:whatmin}.
For our purposes, the key property
is that ML-VAMP enables exact predictions of
its performance in the large system limit.  
Specifically, the error of the algorithm estimates
in each iteration can be predicted by
a set of deterministic recursive equations called
the \emph{state evolution} or SE\@.
The fixed points
of these equations provide a way of computing the 
asymptotic performance of the algorithm.
In certain cases, the algorithm can be proven to 
be Bayes optimal
\cite{reeves2017additivity,gabrie2018entropy,barbier2019optimal}.

This approach of using AMP methods to characterize the
generalization error of GLMs was also explored in \cite{barbier2019optimal} for i.i.d.\ distributions
on the data. 
The explicit formulae for the asymptotic mean squared error for the regularized linear regression  with rotationally invarient data matrices is proved in \cite{gerbelot2020asymptotic}.
The ML-VAMP method in this
work enables extensions
to correlated features and to mismatch between training and test distributions.

\section{Generalization Error: System Model} \label{sec:lsl}

We consider the problem of estimating the weights
$\wbf$ in the GLM model \eqref{eq:glm}.
As stated in the Introduction, we 
suppose we have training data $\{(\xbf_i,y_i)\}_{i=1}^N$ arranged as $\Xbf:=[\xbf_1 \, \xbf_2 \ldots \xbf_N]\tran \in \R^{N\times p}$,
$\ybf :=[y_1\, y_2\ldots y_N]\tran \in \R^N$.  Then we can write
\beq \label{eq:glm_true}
    \ybf = \phibf_{\rm out}(\Xbf\wbf^0, \dbf),
\eeq
where $\phibf_{\rm out}(\zbf,\dbf)$ is the vector-valued function such that
$[\phibf_{\rm out}(\zbf,\dbf)]_n = \phi_{\rm out}(z_n,d_n)$ and $\{d_n\}_{n=1}^N$ are general noise.

Given the training data $(\Xbf,\ybf)$, we consider 
estimates of $\wbf^0$ given by a 
regularized empirical risk minimization of the form \eqref{eq:whatmin}.
We assume that the loss function $F_{\rm out}$ and regularizer $F_{\rm in}$ are
separable functions, \ie, one can write
\begin{equation}\label{eq:Fsep}
\begin{aligned} 
    F_{\rm out}(\ybf,\zbf) = \sum_{n=1}^N f_{\rm out}(y_n,z_n),
\ \    F_{\rm in}(\wbf) = \sum_{j=1}^p f_{\rm in}(w_j),
\end{aligned}
\end{equation}
for some functions $f_{\rm out}:\Real^{2}\rightarrow\Real$ and $f_{\rm in}:\Real\rightarrow\Real$.
Many standard optimization problems in machine learning can be written in this form:
logistic regression, support vector machines, linear regression, Poisson regression.

\textbf{Large System Limit:} 
We follow the LSL analysis of \cite{BayatiM:11}
commonly used for analyzing AMP-based methods. 
Specifically, we consider a 
sequence of problems indexed by the number of training
samples $N$.  For each $N$, we suppose that the number of features $p=p(N)$
grows linearly with $N$, \ie, 
\begin{align}\label{eq:lsl}\lim_{N \arr \infty} \frac{p(N)}{N} \rightarrow \beta\end{align}
for some constant $\beta\in(0,\infty)$.  Note that $\beta > 1$ corresponds to the over-parameterized regime
and $\beta < 1$ corresponds to the under-parameterized regime. 

\textbf{True parameter:}  
We assume the 
\emph{true} weight vector $\wbf^0$ has components whose
empirical distribution converges as
\beq \label{eq:w0lim}
    \lim_{N \rightarrow \infty} \{ w^0_n \}
    \stackrel{PL(2)}{=} W^0,
\eeq
for some limiting random variable $W^0$.
The precise definition of empirical convergence
is given in Appendix \ref{app:definitions}.
It means that the empirical distribution $\tfrac1p\sum_{i=1}^p\delta_{w_i}$ converges, in the Wasserstein-2 metric
(see Chap. 6 \cite{villani2008optimal}),
to the distribution of the finite-variance random variable $W^0$. 
Importantly, the limit \eqref{eq:w0lim}
will hold if the components $\{w^0_i\}_{i=1}^p$ are drawn i.i.d.\ 
from the distribution of $W^0$ with $\Exp(W^0)^2 < \infty$.  
However, as discussed in Appendix \ref{app:definitions}, the convergence can also be satisfied by correlated sequences and deterministic sequences.

\textbf{Training data input:}
For each $N$,
we assume that the training input data samples, 
$\xbf_i \in \R^p$, $i=1,\ldots,N$,
are i.i.d.\ and drawn from a $p$-dimensional 
Gaussian distribution with zero mean and covariance
$\Sigmabf_{\rm tr} \in \R^{p \times p}$.  
The covariance can capture
the effect of features being correlated.  %
We assume 
the covariance matrix has an eigenvalue decomposition,
\beq \label{eq:Pxtrain_eig}
    \Sigmabf_{\rm tr} = \tfrac{1}{p}\Vbf_0\tran
    \mathrm{diag}(\sbf_{\rm tr}^2)\Vbf_0,
\eeq
where $\sbf_{\rm tr}^2$ are the eigenvalues of $\Sigmabf_{\rm tr}$
and $\Vbf_0 \in \R^{p \times p}$ is the orthogonal
matrix of eigenvectors.  The scaling $\tfrac{1}{p}$
ensures that the total variance of the samples,
$\Exp \|\xbf_i\|^2$,
does not grow with $N$.
We will place a certain random model on $\sbf_{\rm tr}$ and $\Vbf_0$ momentarily.

Using the covariance \eqref{eq:Pxtrain_eig},
we can write the data matrix as
\beq \label{eq:XtrU}
    \Xbf = \Ubf\, \mathrm{diag}(\sbf_{\rm tr})\Vbf_0,
\eeq
where $\Ubf \in \R^{N \times p}$ has entries drawn
i.i.d.\ from ${\mathcal N}(0,\tfrac1p)$.
For the purpose of analysis, it is useful to 
express the matrix $\Ubf$ in terms of its SVD:
\beq \label{eq:Usvd}
    \Ubf = \Vbf_2 \Sbf_{\rm mp}\Vbf_1, \quad
    \Sbf_{\rm mp} := \left[ \begin{array}{cc} 
        \mathrm{diag}(\sbf_{\rm mp}) & \mathbf{0} \\
        \mathbf{0} & * \end{array} \right] 
\eeq
where $\Vbf_1 \in \R^{N \times N}$ and $\Vbf_2 \in \R^{p \times p}$
are orthogonal and $\Sbf_\mp\in\Real^{N\times p}$ with non-zero entries $\sbf_{\rm mp} \in \R^{\min\{N,p\}}$ only along the principal diagonal. $\sbf_\mp$ are the singular values of $\Ubf$. 
A standard result of random matrix theory is that, since $\Ubf$ is i.i.d.\ Gaussian with entries ${\mathcal N}(0,\tfrac{1}{p})$,
the matrices $\Vbf_1$ and $\Vbf_2$ are Haar-distributed on the group of orthogonal matrices and  $\sbf_{\rm mp}$ is such that
\beq \label{eq:smp_lim}
    \lim_{N \rightarrow \infty} \{ s_{{\rm mp},i} \} 
    \stackrel{PL(2)}{=} S_{\rm mp},
\eeq
where $S_{\rm mp} \geq 0$ is a non-negative random variable such that $S_{\rm mp}^2$ satisfies
the Marcencko-Pastur distribution. Details on this distribution are in Appendix \ref{app:mp_dist}.

\textbf{Training data output:}
Given the input data $\Xbf$, we assume that the
training outputs $\ybf$ are generated from 
\eqref{eq:glm_true},
where the noise $\dbf$ is independent of $\Xbf$ and has an empirical 
distribution which converges as
\beq \label{eq:D_lim}
    \lim_{N \rightarrow \infty} \{ d_i \}
    \stackrel{PL(2)}{=} D.
\eeq
Again, the limit \eqref{eq:D_lim} 
will be satisfied if 
$\{d_i\}_{i=1}^N$ are i.i.d. draws of random variable $D$
with bounded second moments.

\textbf{Test data:}
To measure the generalization error, we assume
now that we are given a test point $\xbf_{\rm ts}$,
and we obtain the true output $y_{\rm ts}$ and predicted output $\wh y_{\rm ts}$ given by  \eqref{eq:ytest}.
We assume that the test data inputs are also Gaussian, \ie,
\beq \label{eq:xtest}
    \xbf_{\rm ts}\tran = 
    \ubf\tran\mathrm{diag}(\sbf_{\rm ts})\Vbf_0,
\eeq
where $\ubf \in \R^p$ has 
i.i.d.\ Gaussian components,
${\mathcal N}(0,\tfrac1p)$, and $\sbf_{\rm ts}$
and $\Vbf_0$ are the eigenvalues and eigenvectors
of the test data covariance matrix.  
That is, the test data sample has a covariance matrix
\beq \label{eq:Pxtest_eig}
    \Sigmabf_{\rm ts} = \tfrac{1}{p}\Vbf_0\tran
    \mathrm{diag}(\sbf_{\rm ts}^2)\Vbf_0.
\eeq
In comparison to \eqref{eq:Pxtrain_eig},
we see that we are assuming that the
eigenvectors of the training and test data
are the same, but the eigenvalues may be different.
In this way, we can capture distributional mismatch
between the training and test data.  For example, 
we will be able to measure the generalization error
when the test sample is outside a subspace
explored by the training data.

To capture the relation between the training 
and test distributions, we assume that
components of $\sbf_{\rm tr}$ and $\sbf_{\rm ts}$
converge as
\beq \label{eq:Slim}
    \lim_{N \arr \infty} 
        \{ (s_{{\rm tr},i},s_{{\rm ts},i}) \}
    \stackrel{PL(2)}{=} (S_{\rm tr}, S_{\rm ts}),
\eeq
to some non-negative, bounded 
random vector $(S_{\rm tr}, S_{\rm ts})$.
The joint distribution on $(S_{\rm tr}, S_{\rm ts})$
captures the relation between the training and
test data. 

When
$S_{\rm tr}= S_{\rm ts}$, our model corresponds to
the case when the training and test distribution are matched. Isotropic Gaussian features
in both training and test data  correspond to covariance matrices
$\Sigmabf_{\rm tr} = \tfrac{1}{p}{\sigma^2_{\rm tr}}\Ibf$,
$\Sigmabf_{\rm ts} = \tfrac{1}{p}{\sigma^2_{\rm ts}}\Ibf$,
which can be modeled as
$S_{\rm tr} = \sigma_{\rm tr}$, 
$S_{\rm ts} = \sigma_{\rm ts}$.
We also require that the matrix $\Vbf_0$
is uniformly distributed on the set of $p \times p$
orthogonal matrices.

\textbf{Generalization error:}
From the training data, we obtain an estimate $\wbfhat$ via a regularized empirical risk minimization \eqref{eq:whatmin}.
Given a test sample $\xbf_{\rm ts}$
and parameter estimate $\wbfhat$,
the true output $y_{\rm ts}$ and predicted
output $\wh{y}_{\rm tr}$ are given by equation \eqref{eq:ytest}.
We assume the test noise is distributed as
$d_{\rm ts} \sim D$, 
following the same distribution as the training data. 
The postulated inverse-link function $\phi(\cdot)$ in 
\eqref{eq:ytest} may be different from the true inverse-link
function $\phi_{\rm out}(\cdot)$.

The generalization error is defined as the asymptotic
expected loss, 
\beq \label{eq:Etest}
    {\mathcal E}_{\rm ts} := 
    \lim_{N \rightarrow \infty} 
    \Exp f_{\rm ts}(\wh{y}_{\rm ts},y_{\rm ts}),
\eeq
where $f_{\rm ts}(\cdot)$ is some loss function 
relevant for the test error (which may be different
from the training loss).  
The expectation in \eqref{eq:Etest} is with respect
to the randomness in the training as well as test data, and
the noise. Our main result provides a formula for the generalization error \eqref{eq:Etest}.

\section{Learning GLMs via ML-VAMP} \label{sec:ml_vamp_analysis}

There are many
methods for solving 
the minimization problem \eqref{eq:whatmin}.  
We apply the ML-VAMP algorithm of \cite{fletcher2018inference,pandit2019asymptotics}.
This algorithm is not necessarily the most computationally efficient method.
For our purposes, however, the algorithm serves as a constructive proof technique, \ie, it enables exact predictions for generalization error in the LSL as described above.
Moreover, in the case when loss function \eqref{eq:whatmin} is
strictly convex, the problem has a unique global minimum,
whereby the generalization error of this minimum is agnostic to the choice of algorithm used to find this minimum. To that end, we start by reformulating \eqref{eq:whatmin} in a form that is amicable to the application of ML-VAMP, Algorithm~\ref{algo:ml_vamp}.
\paragraph*{Multi-Layer Representation.}
The first step in applying ML-VAMP to the GLM learning
problem  is to represent the mapping from
the true parameters $\wbf^0$ to the output $\ybf$ as
a certain multi-layer network.  We
combine \eqref{eq:glm_true}, \eqref{eq:XtrU} and \eqref{eq:Usvd},
so that the mapping $\wbf^0\mapsto\ybf$ can be written as the following sequence of operations (as illustrated in Fig.~\ref{fig:graph_mod}):
\begin{equation} \label{eq:sig_flow_true}
\begin{aligned}
    &\zbf^0_0 := \wbf^0, 
    &\pbf^0_0 := \Vbf_0\zbf^0_0, \\
    &\zbf_1^0 := \phibf_1(\pbf^0_0,\xibf_1), 
    &\pbf^0_1 := \Vbf_1\zbf_1^0, \\
    &\zbf_2^0 := \phibf_2(\pbf^0_1,\xibf_2), 
    &\pbf^0_2 := \Vbf_2\zbf_2^0, \\
    &\zbf^0_3 := \phibf_3(\pbf_2^0, \xibf_3)=\ybf,
\end{aligned}
\end{equation}
where $\xibf_\ell$ are the following vectors:
\beq  \label{eq:xidef}
    \xibf_1 := \sbf_{\rm tr}, \quad
    \xibf_2 := \sbf_{\rm mp},
    \quad
    \xibf_3 := \dbf,
\eeq
and the functions $\phibf_\ell(\cdot)$ 
are given by
\begin{equation} \label{eq:phib_ml}
\begin{aligned}
    \phibf_1(\pbf_0,\sbf_{\rm tr}) &:= \mathrm{diag}(\sbf_{\rm tr} )
    \pbf_0, %
    \\
    \phibf_2(\pbf_1,\sbf_{\rm mp}) &:= 
    \Sbf_{\rm mp} \pbf_1,  %
    \\
    \phibf_3(\pbf_2,\dbf) &:= \phibf_{\rm out}(\pbf_2,\dbf).
\end{aligned}
\end{equation}
We see from Fig.~\ref{fig:graph_mod} that the mapping of true parameters
$\wbf^0 = \zbf^0_0$ to the observed response vector
$\ybf  = \zbf^0_3$ 
is described by a multi-layer network of alternating
orthogonal operators $\Vbf_\ell$ and non-linear functions
$\phibf_\ell(\cdot)$.  
Let $L=3$ denote the number of layers in this multi-layer network.

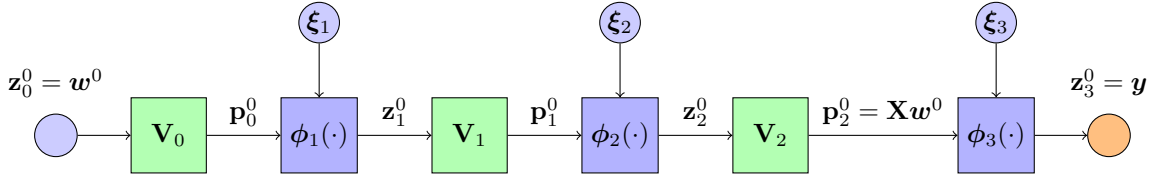
\begin{figure*} 
\centering
\begin{tikzpicture}
   \pgfmathsetmacro{\sep}{2};
    \tikzstyle{var}=[draw,circle,fill=blue!20,node distance=0cm,inner sep=0.1cm];
    \tikzstyle{vary}=[draw,circle,fill=orange!50,node distance=2cm,inner sep=0.1cm];
    
    \tikzstyle{rot}=[draw, fill=green!30, minimum size=1cm, node distance=\sep cm];
    
    \tikzstyle{act}=[draw,fill=blue!30, minimum size=1cm, node distance=\sep cm];
    
    \node [var, inner sep=0.2cm] (z0) {};
    \node [rot, right of=z0,xshift=-.5cm] (V0) {$\Vbf_0$};
    \node [act, right of=V0] (phi1) {$\phibf_1(\cdot)$};
    \node [rot, right of=phi1] (V1) {$\Vbf_1$};
    \node [act, right of=V1] (phi2) {$\phibf_2(\cdot)$};
    \node [rot, right of=phi2] (V2) {$\Vbf_2$};
    \node [act, right of=V2,xshift=1cm]  (phi3) {$\phibf_3(\cdot)$};
    \node [vary, right of=phi3,inner sep=0.2cm,xshift=-.5cm] (y) {};
    \node [above of=y,yshift=-0.3cm] () {$\zbf^0_3=\ybf$};
    \node [above of=z0,yshift=-0.3cm] () {$\zbf^0_0=\wbf^0$};
    
    \draw [->] (z0) --(V0);
    \draw [->] (V0) -- node [above] {$\pbf_0^0$} (phi1);
    \draw [->] (phi1) -- node [above] {$\zbf_1^0$} (V1);
    \draw [->] (V1) -- node [above] {$\pbf_1^0$} (phi2);
    \draw [->] (phi2) -- node [above] {$\zbf_2^0$} (V2);
    \draw [->] (V2) -- node [above,] {$\pbf_2^0 = \Xbf \wbf^0$} (phi3);
    \draw [->] (phi3) -- (y);

    \foreach \i in {1,2,3}
    {
        \node [above of=phi\i,var,inner sep=.3mm,yshift=1.5cm] (xi\i) {$\xibf_\i$};
        \draw [->] (xi\i) -- (phi\i);
    }
\end{tikzpicture} \caption{Sequence flow representing the mapping from the
unknown parameter values $\wbf^0$ to the vector of responses
$\ybf$ on the training data.}
\label{fig:graph_mod}
\end{figure*}

The minimization \eqref{eq:whatmin} can also
be represented using a similar signal flow graph.
Given a parameter candidate $\wbf$, the  mapping $\wbf\mapsto\Xbf\wbf$ can be written using the sequence of vectors
\begin{equation} \label{eq:sig_flow_est}
\begin{aligned}
    &\zbf_0 := \wbf, 
    &\pbf_0 &:= \Vbf_0\zbf_0, \\
    &\zbf_1 := \Sbf_{\rm tr}\pbf_0, 
    &\pbf_1 &:= \Vbf_1\zbf_1, \\
    &\zbf_2 := \Sbf_{\rm mp}\pbf_1, 
    &\pbf_2 &:= \Vbf_2\zbf_2 = \Xbf \wbf.
\end{aligned}
\end{equation}
There are $L=3$ steps in this sequence, and we let 
\[
    \zbf = \{\zbf_0,\zbf_1,\zbf_{2} \}, \quad
    \pbf = \{\pbf_0,\pbf_1,\pbf_{2} \}
\]
denote the sets of vectors across the steps.
The minimization in \eqref{eq:whatmin} can then be written in the following equivalent form:
\beq \label{eq:zpmin1}
\begin{aligned}
    \MoveEqLeft\min_{\zbf,\pbf}\ F_0(\zbf_0) + F_1(\pbf_0,\zbf_1)+    F_2(\pbf_1,\zbf_1) + 
    F_3(\pbf_2)\\
    &{\rm subject\ to}\quad \pbf_\ell = \Vbf_\ell\zbf_\ell, \quad \ell=0,1,2,
\end{aligned}
\eeq
where the \emph{penalty functions} $F_\ell$ are defined as
\begin{equation}
\label{eq:Fell}
\begin{aligned}
    &F_0(\cdot) = F_{\rm in}(\cdot), 
    & F_1(\cdot,\cdot) 
    =& \delta_{\left\{\zbf_1 = \Sbf_{\rm tr}\pbf_0\right\}}(\cdot,\cdot) ,\\
    &F_2(\cdot,\cdot) 
    = \delta_{\left\{\zbf_2 = \Sbf_{\rm mp}\pbf_1\right\}}(\cdot,\cdot),
    & F_3(\cdot) =& F_{\rm out}(\ybf,\cdot),
\end{aligned}
\end{equation}
where $\delta_{\mc A}(\cdot)$ is $0$ on the set $\mc A$, and $+\infty$ on $\mc A^c$.

\paragraph*{ML-VAMP for GLM Learning.}  
Using this multi-layer representation, we can now apply 
the ML-VAMP algorithm from \cite{fletcher2018inference,pandit2019asymptotics}
to solve the optimization \eqref{eq:zpmin1}.
The steps are shown in Algorithm~\ref{algo:ml_vamp}.
These steps are a special case of the  
``MAP version" of ML-VAMP in 
\cite{pandit2019asymptotics}, but with a slightly different
set-up for the GLM problem.
We will call these steps the ML-VAMP GLM Learning Algorithm.

\begin{algorithm}[t]
\caption{ML-VAMP GLM Learning Algorithm}
\begin{algorithmic}[1]  \label{algo:ml_vamp}
\STATE{Initialize  $\gamma^{-}_{0\ell}>0$, $\rbf^-_{0\ell}=0$ for $\ell=0,\ldots,\Lm1$}
    \label{line:init}
\STATE{}
\FOR{$k=0,1,\dots$}\label{line:start_algo_for}
    \STATE{// \texttt{Forward Pass} }
    \FOR{$\ell=0,\ldots,L-1$}
        \IF{$\ell=0$}
           \STATE{$\zbfhat_{k0} = \gbf^+_{0}(\rbf_{k0}^-,\gamma^-_{k0})$}  \label{line:zhatp0}
        \ELSE
        \STATE{$\zbfhat_{k\ell} = \gbf^+_{\ell}(\rbf_{k,\lm1}^+,\rbf_{k\ell}^-,\gamma^+_{k,\lm1},
        \gamma^-_{k\ell})$} 
        \label{line:zhatp}
        \ENDIF
        \STATE{$\alpha_{k\ell}^+= \bkt{\partial \zbfhat_{k\ell}/ \partial \rbf_{k\ell}^-}$} \label{line:alphap}
        \STATE{$\displaystyle \rbf_{k\ell}^+ = 
        \frac{\Vbf_\ell(\zbfhat_{k\ell}-\alpha_{k\ell}^+\rbf_{k\ell}^-)}{1-\alpha^+_{k\ell}}$}
            \label{line:rp}
        \STATE{$\gamma_{k\ell}^+ = (1/\alpha_{k\ell}^+-1)\gamma_{k\ell}^-$}
        \label{line:gamp}
    \ENDFOR
    \STATE{}

    \STATE{// \texttt{Backward Pass} }
    \FOR{$\ell=L,\ldots,1$}
        \IF {$\ell=L$}
        \STATE{$\pbfhat_{k,\Lm1} = \gbf_L^-(\rbf^+_{k,\Lm1},\gamma^+_{k,\Lm1})$} \label{line:phatn0}
        \ELSE
        \STATE{$\pbfhat_{k,\lm1} = \gbf^-_{\ell}(\rbf_{k,\lm1}^+,\rbf_{\kp1,\ell}^-,\gamma^+_{k,\lm1},
        \gamma^-_{\kp1,\ell})$} 
        \label{line:phatn}
        \ENDIF
        \STATE{$\alpha_{k,\lm1}^-=\bkt{\partial \pbfhat_{k,\lm1}/ \partial \rbf_{k,\lm1}^+}$} \label{line:alphan}
        \STATE{$\displaystyle \rbf_{\kp1,\lm1}^- = 
        \frac{\Vbf_{\lm1}\tran(\pbfhat_{k,\lm1}-\alpha_{k,\lm1}^-\rbf_{k,\lm1}^+)}{1-\alpha^-_{k,\lm1}}$}
            \label{line:rn}
        \STATE{$\gamma_{\kp1,\lm1}^- = (1/\alpha_{k,\lm1}^--1)\gamma_{k,\lm1}^+$}
        \label{line:gamn}    
    \ENDFOR

\ENDFOR\label{line:end_algo_for}
\end{algorithmic}
\end{algorithm} 
The algorithm  operates in a set of iterations 
indexed by $k$.  In each iteration, a ``forward pass" through the layers generates estimates 
$\zbfhat_{k\ell}$ for the hidden variables $\zbf^0_\ell$,
while a ``backward pass" generates estimates $\pbfhat_{k\ell}$
for the variables $\pbf^0_\ell$.
In each step, the estimates $\zbfhat_{k\ell}$ and $\pbfhat_{k\ell}$ 
are produced by functions $\gbf^{+}_\ell(\cdot)$
and $\gbf^{-}_\ell(\cdot)$
called \emph{estimators} or \emph{denoisers}.

For the MAP version of ML-VAMP algorithm in \cite{pandit2019asymptotics},
 the denoisers are essentially proximal-type operators defined as
\begin{align}
\label{eq:proximal_operator_def}
\prox_{F/\gamma}(\bm{u}) := \underset{\xbf}{\rm argmin}\ F(\xbf) + \tfrac{\gamma}{2}\norm{\xbf-\bm{u}}^2.
\end{align}
An important property of the proximal operator is that for separable functions $F$ of the form \eqref{eq:Fsep}, we have
$[\prox_{F/\gamma}(\bm{u})]_i=\prox_{f/\gamma}(\bm{u}_i)$. 

In the case of the GLM model,
for $\ell=0$ and $L$, on lines \ref{line:zhatp0} and \ref{line:phatn0}, the denoisers are proximal operators  given by
\begin{subequations} \label{eq:Gb03_def}
\begin{align}
    \gbf_0^+(\rbf_0^-,\gamma_0^-)
     &=\prox_{F_{\rm in}/\gamma_0^-}(\rbf^-_0),
        \label{eq:Gb0_def} \\
    \gbf_3^-(\rbf_2^+,\ybf,\gamma_2^+)
       &=\prox_{F_{\rm out}/\gamma_2^+}(\rbf^+_2).
        \label{eq:Gb3_def}
\end{align}
\end{subequations}
Note that in \eqref{eq:Gb3_def}, there is a dependence
on $\ybf$ through the term $F_{\rm out}(\ybf,\cdot)$.
For the \emph{middle} terms, $\ell=1,2$, \ie, lines \ref{line:zhatp} and \ref{line:phatn}, the denoisers
are given by
\begin{subequations} \label{eq:Gb_pz}
\begin{align}
    &\gbf_\ell^+(\rbf_{\lm1}^+,\rbf_\ell^-,\gamma_{\lm1}^+,\gamma_{\ell}^-) 
        := \zbfhat_\ell, \\
    &\gbf_\ell^-(\rbf_{\lm1}^+,\rbf_\ell^-,\gamma_{\lm1}^+,\gamma_{\ell}^-) 
        := \pbfhat_{\lm1},
\end{align}
\end{subequations}
where $(\pbfhat_{\lm1},\zbfhat_\ell)$ are the solutions to 
the minimization
\begin{align}
    (\pbfhat_{\lm1},\zbfhat_\ell)
        := \underset{(\pbf_{\lm1},\zbf_\ell)}{\rm argmin}\  &F_\ell(\pbf_{\lm1},\zbf_\ell) 
        +  \frac{\gamma_{\ell}^-}{2}\|\zbf_{\ell} - \rbf_{\ell}^-\|^2
        \nonumber \\
        &+\frac{\gamma_{\lm1}^+}{2}\|\pbf_{\lm1} - \rbf_{\lm1}^+\|^2 
        . \label{eq:pz_min}
\end{align}
The quantity $\inner{\partial \bm{v}/\partial \bm{u}}$ on lines \ref{line:alphap} and \ref{line:alphan} denotes the empirical mean $\tfrac1N\sum_{n=1}^N \partial v_n/\partial u_n$.

Thus, the ML-VAMP algorithm in Algorithm~\ref{algo:ml_vamp}
reduces the joint constrained minimization \eqref{eq:zpmin1} 
over variables $(\zbf_0,\zbf_1,\zbf_2)$ and $(\pbf_0,\pbf_1,\pbf_2)$ 
to a set of proximal 
operations on pairs of variables $(\pbf_{\lm1},\zbf_\ell)$.
As discussed in \cite{pandit2019asymptotics}, this type
of minimization is similar to ADMM with adaptive step-sizes.
Details of the denoisers $\gbf_\ell^\pm$ and other aspects of the algorithm
are given in Appendix~\ref{app:ml_vamp_details}.
\section{Main Result} \label{sec:main_result}

We make two assumptions.
The first assumption imposes certain
regularity conditions on the functions
$f_\ts$, $\phi$, $\phi_{\rm out}$,
and maps $\gbf_\ell^\pm$ appearing in Algorithm~\ref{algo:ml_vamp}.
The precise definitions
of pseudo-Lipschitz continuity and
uniform Lipschitz continuity are given
in Appendix~\ref{app:definitions} of the supplementary
material.

\begin{assumption} \label{as:continuity} 
The denoisers and link functions satisfy
the following continuity conditions:
\begin{enumerate}[label=(\emph{\alph*}),topsep=0pt]
    \item The proximal operators in \eqref{eq:Gb03_def},
\[
    \gbf_0^+(\rbf_0^-,\gamma_0^-), \quad
    \gbf_3^-(\rbf_2^+,\ybf,\gamma_2^+),
\]
are uniformly Lipschitz continuous in 
$\rbf^-_0$ and $(\rbf^+_{2},\ybf)$ over parameters 
$\gamma^-_{0}$  and $\gamma^+_{2}$.  

\item The link function $\phi_{\rm out}(p,d)$
is Lipschitz continuous in $(p,d)$.
The test error function 
$f_{\rm ts}(\phi(\zhat),\phi_{\rm out}(z,d))$
is pseduo-Lipschitz continuous in $(\zhat,z,d)$
of order 2.
\end{enumerate}
\end{assumption}

Our second assumption is that the ML-VAMP algorithm converges. 
Specifically,
let $\xbf_k = \xbf_k(N)$ be any set of outputs of Algorithm~\ref{algo:ml_vamp}, 
at some iteration $k$ and dimension $N$.  For example,
$\xbf_k(N)$ could be $\zbfhat_{k\ell}(N)$ or
$\pbfhat_{k\ell}(N)$ for some $\ell$, or a concatenation of signals such as
$\begin{bmatrix}\zbf^0_\ell(N)& \zbfhat_{k\ell}(N)\end{bmatrix}$.

\begin{assumption} \label{as:conv}
Let $\xbf_k(N)$ be any finite set of 
outputs of the ML-VAMP algorithm as above.
Then there exist limits 
\beq \label{eq:xconvk} 
    \xbf(N)=\lim_{k \rightarrow \infty} \xbf_k(N)
\eeq
satisfying
\beq \label{eq:xconv}
    \lim_{k \rightarrow \infty} \lim_{N \rightarrow \infty}
        \frac{1}{N} \|\xbf_k(N) - \xbf(N)\|^2  = 0.
\eeq
\end{assumption}

We are now ready to state our main result.

\begin{theorem} \label{thm:thm1_complete} 
Consider the GLM learning problem \eqref{eq:whatmin} solved by applying Algorithm~\ref{algo:ml_vamp} to the equivalent problem \eqref{eq:zpmin1}
under the assumptions of Section~\ref{sec:lsl}
along with Assumptions~\ref{as:continuity} and~\ref{as:conv}. 
Then, there exist constants $\tau_0^-,\gammabar_0^+>0$ and $\Mbf\in \Real^{2\times 2}_{\succ 0}$ such that the following hold:
\ignore{Let $(\zbfhat,\pbfhat)$ be a fixed point of Algorithm \ref{algo:ml_vamp}, and $(\Kbf_0^+, \tau_0^-, \gammabar_0^+,\gammabar_1^-, \tau_1^-)$ be the unique fixed point of Algorithm \ref{algo:ml_vamp_se} such that 
$\Kbf_0^+=\begin{bmatrix}k_{11}& k_{12}\\ k_{12} & k_{22}\end{bmatrix}\succ 0$ and 
$\tau_0^-, \gammabar_0^+,\gammabar_1^-, \tau_1^->0$.
Then, we have
}
\begin{enumerate}[label=(\alph*)]
\item The fixed points $\{\zbfhat_\ell,\pbfhat_\ell\}$,
$\ell=0,1,2$ of Algorithm \ref{algo:ml_vamp} satisfy the KKT conditions for the constrained optimization problem \eqref{eq:zpmin1}. Equivalently $\wbfhat:=\zbfhat_0$ is a stationary point of \eqref{eq:whatmin}.

\item %
The true parameter $\wbf^0$ and its 
estimate $\wbfhat$ empirically converge as
\beq \label{eq:wwhat_lim}
    \lim_{N \rightarrow \infty}
    \{ (w^0_i,\wh{w}_{i}) \} 
    \stackrel{PL(2)}{=} (W^0,\What),
\eeq
where $W^0$ is the random variable from \eqref{eq:w0lim} and
\beq \label{eq:wwhat_mod}
    \What = \prox_{f_{\rm in}/\gammabar_0^+}(W^0+  Q_0^-),
\eeq
with $Q_0^- = \Norm(0,\tau_0^-)$ independent of $W^0$.

\item 
The asymptotic generalization error \eqref{eq:Etest} with $(y_\ts,\wh y_\ts)$  defined as \eqref{eq:ytest}
is given by
\begin{align}\label{eq:generalization_main_result}
    \mc E_{\rm ts} = \Exp\,f_{\rm ts}\!\left(\phi_{\rm out}(Z_{\rm ts},D),\phi(\Zhat_{\rm ts})\right),
\end{align}
where $(Z_{\rm ts},\Zhat_{\rm ts})\sim \mc N(\zero_2,\Mbf)$ and independent of $D$. 
\ignore{
Further, the entries of covariance $\Mbf\in\Real^{2\times 2}$ are 
\begin{itemize}
    \item $m_{11} = \Exp S_\ts^2\cdot k_{11}$,
    \item $m_{12} = m_{11}+ \Exp\left(\frac{S_\ts^2\wb\gamma_0^+}{\wb\gamma_0^++S_\tr^2\wb\gamma_1^-}\right)\cdot k_{12}$, and
    \item 
    $m_{22}= \Exp\left(\frac{\wb\gamma_0^+ S_\ts}{\wb\gamma_0^++S_\tr^2\wb\gamma_1^-}\right)^2 k_{22}
     \nonumber\\
     +\Exp  \left(\frac{\wb\gamma_1^- S_\tr S_\ts}{\wb\gamma_0^++S_\tr^2\wb\gamma_1^-}\right)^2\tau_1^- + 2m_{12}-m_{11} .
    $
\end{itemize}
}
\end{enumerate}
\end{theorem}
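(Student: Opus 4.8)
The plan is to establish the three claims in order, deriving (a) as a deterministic (per-realization) property of Algorithm~\ref{algo:ml_vamp} and (b)--(c) from the ML-VAMP state evolution (SE) applied to the multi-layer model \eqref{eq:sig_flow_true}--\eqref{eq:phib_ml}. For (a) I would write out the fixed-point equations of Algorithm~\ref{algo:ml_vamp}: drop the iteration index $k$ and freeze $\rbf^\pm_\ell$, $\gamma^\pm_\ell$, $\alpha^\pm_\ell$, $\zbfhat_\ell$, $\pbfhat_\ell$ at their limits. Each denoiser is defined variationally --- $\zbfhat_0$ and $\pbfhat_2$ by the proximal maps \eqref{eq:Gb03_def}, each middle pair $(\pbfhat_{\lm1},\zbfhat_\ell)$ by the joint minimization \eqref{eq:pz_min} --- so its first-order optimality condition couples the subgradient of $F_\ell$ with the quadratic perturbations $\gamma^-_\ell(\zbfhat_\ell-\rbf^-_\ell)$ and $\gamma^+_{\lm1}(\pbfhat_{\lm1}-\rbf^+_{\lm1})$. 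Substituting the fixed-point forms of $\rbf^\pm_\ell$ from lines~\ref{line:rp} and~\ref{line:rn}, using $\gamma^+_\ell=(1/\alpha^+_\ell-1)\gamma^-_\ell$ (line~\ref{line:gamp}) and the divergence identities $\alpha^+_\ell=\bkt{\partial\zbfhat_\ell/\partial\rbf^-_\ell}$, $\alpha^-_{\lm1}=\bkt{\partial\pbfhat_{\lm1}/\partial\rbf^+_{\lm1}}$, the Onsager corrections cancel and what remains is exactly the KKT system of \eqref{eq:zpmin1}: the indicator penalties $F_1,F_2$ enforce feasibility $\pbfhat_\ell=\Vbf_\ell\zbfhat_\ell$, and the rest is stationarity of the Lagrangian $F_0(\zbf_0)+\sum_\ell F_\ell+\sum_\ell\bkt{\lambdabf_\ell,\pbf_\ell-\Vbf_\ell\zbf_\ell}$; eliminating $(\pbf_\ell,\zbf_\ell)_{\ell\ge1}$ leaves stationarity of \eqref{eq:whatmin} with $\wbfhat=\zbfhat_0$. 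This is the GLM specialization of the MAP--ML-VAMP KKT analysis in \cite{pandit2019asymptotics}; I record $\pbfhat_0=\Vbf_0\wbfhat$ for later use.

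For (b), I invoke the ML-VAMP SE theorem of \cite{fletcher2018inference,pandit2019asymptotics}. Its hypotheses hold here: $\Vbf_0,\Vbf_1,\Vbf_2$ are independent Haar (Section~\ref{sec:lsl}); the signal $\wbf^0$ and the layer parameters $\xibf_1=\sbf_{\rm tr}$, $\xibf_2=\sbf_{\rm mp}$, $\xibf_3=\dbf$ converge in $PL(2)$ by \eqref{eq:w0lim}, \eqref{eq:smp_lim}, \eqref{eq:D_lim}; and the denoisers $\gbf^\pm_\ell$ are uniformly Lipschitz --- by Assumption~\ref{as:continuity}(a) for $\ell\in\{0,3\}$, and automatically for $\ell\in\{1,2\}$ since those solve a linearly-constrained quadratic, hence are affine with bounded slope ($\sbf_{\rm tr}$ bounded, $\sbf_{\rm mp}$ of bounded MP support). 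For each fixed $k$ the SE then states that the empirical joint laws of $(\zbf^0_\ell,\zbfhat_{k\ell})$ and $(\pbf^0_\ell,\pbfhat_{k\ell})$ converge to explicit limits in which $\rbf^-_{k\ell}=\zbf^0_\ell+\Norm(\zero,\tau^-_{k\ell}\Ibf)$ with the noise independent of $\zbf^0_\ell$ (symmetrically on the $\pbf$-side), the deterministic scalars $(\tau^\pm_{k\ell},\gamma^\pm_{k\ell})$ obeying a fixed recursion. In particular $\zbfhat_{k0}$ is, componentwise, a proximal map of $w^0_i$ plus independent Gaussian noise. Using Assumption~\ref{as:conv} and continuity of the SE maps, the iterate convergence \eqref{eq:xconv} forces these scalars to converge to an SE fixed point and justifies exchanging $\lim_k$ and $\lim_N$; passing to the limit in the componentwise prox identity gives \eqref{eq:wwhat_lim}--\eqref{eq:wwhat_mod}.

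For (c), set $Z_{\rm ts}:=\bkt{\xbf_{\rm ts},\wbf^0}$ and $\Zhat_{\rm ts}:=\bkt{\xbf_{\rm ts},\wbfhat}$, so $\wh y_{\rm ts}=\phi(\Zhat_{\rm ts})$, $y_{\rm ts}=\phi_{\rm out}(Z_{\rm ts},d_{\rm ts})$. By \eqref{eq:xtest}, $(Z_{\rm ts},\Zhat_{\rm ts})=(\ubf\tran\abf,\ubf\tran\bbf)$ with $\abf:=\diag(\sbf_{\rm ts})\Vbf_0\wbf^0$, $\bbf:=\diag(\sbf_{\rm ts})\Vbf_0\wbfhat$, and $\ubf\sim\Norm(\zero,\tfrac1p\Ibf)$ independent of the training data and of $d_{\rm ts}\sim D$. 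Conditioned on $(\sbf_{\rm ts},\Vbf_0\wbf^0,\Vbf_0\wbfhat)$ the pair is centered jointly Gaussian with covariance $\tfrac1p[\abf\ \bbf]\tran[\abf\ \bbf]$, whose entries are the empirical averages of $s^2_{{\rm ts},i}$ against products of $(\Vbf_0\wbf^0)_i$ and $(\Vbf_0\wbfhat)_i$. Now $\Vbf_0\wbf^0=\pbf^0_0$ and, by (a), $\Vbf_0\wbfhat=\pbfhat_0$; both are first-stage $\pbf$-side signals, which the SE of (b) tracks jointly with the layer-$1$ parameter $\sbf_{\rm tr}$. Combined with the joint $PL(2)$ limit \eqref{eq:Slim} of $(\sbf_{\rm tr},\sbf_{\rm ts})$ and the fact that $(\pbf^0_0,\pbfhat_0)$ is conditionally independent of $\sbf_{\rm ts}$ given $\sbf_{\rm tr}$ (the training algorithm never sees $\sbf_{\rm ts}$), this yields a joint limit $\{(s_{{\rm ts},i},(\Vbf_0\wbf^0)_i,(\Vbf_0\wbfhat)_i)\}\to(S_{\rm ts},P^0_0,\hat P_0)$. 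Hence the conditional covariance converges a.s.\ to a deterministic $\Mbf$ with $m_{11}=\Exp[S_{\rm ts}^2(P^0_0)^2]$, $m_{12}=\Exp[S_{\rm ts}^2 P^0_0\hat P_0]$, $m_{22}=\Exp[S_{\rm ts}^2\hat P_0^2]$; evaluating $\hat P_0$ through the layer-$1$ denoiser $\hat p_{0,i}=(\gamma^-_1 s_{{\rm tr},i}r^-_{1,i}+\gamma^+_0 r^+_{0,i})/(\gamma^-_1 s^2_{{\rm tr},i}+\gamma^+_0)$ at the SE fixed point yields the closed forms for $m_{11},m_{12},m_{22}$, with $\Mbf\succ0$ inherited from positive-definiteness of the SE covariances. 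Finally, since $D$ has finite second moment, $\ubf$ is independent Gaussian, and $f_{\rm ts}(\phi(\cdot),\phi_{\rm out}(\cdot,\cdot))$ is $PL(2)$ by Assumption~\ref{as:continuity}(b), we pass to the limit in $\Exp f_{\rm ts}(\wh y_{\rm ts},y_{\rm ts})$ to obtain \eqref{eq:generalization_main_result} with $(Z_{\rm ts},\Zhat_{\rm ts})\sim\Norm(\zero_2,\Mbf)$.

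The main obstacle is (c): one must organize the SE so that it controls the \emph{rotated} estimate $\Vbf_0\wbfhat$ (not just the empirical law of $(\wbf^0,\wbfhat)$) and its alignment with the correlated test spectrum $\sbf_{\rm ts}$ --- this alignment, routed through the layer-$1$ parameter $\sbf_{\rm tr}$, is precisely what makes $\Mbf$ depend on the joint law of $(S_{\rm tr},S_{\rm ts})$ rather than being a plain reweighting of the $(W^0,\What)$ covariance. The second delicate point, shared with (b), is the limit interchange: turning the per-iteration SE together with Assumption~\ref{as:conv} into a statement about the fixed point needs a continuity/uniformity argument for the SE recursion and its solution. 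Part (a) is essentially bookkeeping, but the Onsager terms must be tracked with care so that they cancel and leave exactly the KKT system.
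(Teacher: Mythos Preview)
Your proposal is correct and follows essentially the same route as the paper: (a) is handled by citing the MAP--ML-VAMP KKT analysis of \cite{pandit2019asymptotics,pandit2019inference}; (b) combines the per-iteration SE (the paper's Theorem~\ref{thm:se_gen}) with a limit-exchange step (the paper's Proposition~\ref{prop:conv_var}, which packages exactly the $\lim_k$/$\lim_N$ interchange you flag) to get $\What=g_0^+(W^0+Q_0^-,\gammabar_0^-)$; and (c) writes $(z_\ts,\wh z_\ts)=\ubf\tran\diag(\sbf_\ts)[\pbf_0^0\ \pbfhat_0]$, uses the SE-level $PL(2)$ limit of $(p^0_{0,n},\wh p_{0,n})$ to show the conditional covariance $\Mbf^{(N)}\to\Mbf$ from \eqref{eq:M_def}, and concludes. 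The only place the paper is slightly more explicit than your outline is the last step of (c): rather than a generic ``pass to the limit,'' it invokes the closed-form Wasserstein-2 distance between centered Gaussians \cite{givens1984class} to turn $\Mbf^{(N)}\to\Mbf$ into $W_2$-convergence of $(z_\ts^{(N)},\wh z_\ts^{(N)})$ to $\Norm(\zero,\Mbf)$, which is exactly the $PL(2)$ convergence needed to apply Assumption~\ref{as:continuity}(b).
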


Part (a) shows that, similar to gradient descent, Algorithm \ref{algo:ml_vamp} finds the stationary points of problem \eqref{eq:whatmin}.
These stationary points will be unique in strictly convex problems such as linear and logistic regression.
Thus, in such cases, 
the same results will be true for any 
algorithm that finds such stationary points.
Hence, the fact that we are using ML-VAMP is immaterial
-- our results apply to any solver for \eqref{eq:whatmin}.
Note that the convergence to the
fixed points $\{\zbfhat_\ell,\pbfhat_\ell\}$
is assumed from Assumption~\ref{as:conv}.

Part (b) provides an exact description
of the asymptotic 
statistical relation between the
true parameter $\wbf^0$ and its
estimate $\wh \wbf$.  
The parameters $\tau_0^-,\gammabar_0^+>0$ and $\Mbf$
can be explicitly computed using a set
of recursive equations called the \emph{state
evolution} or SE described in Appendix \ref{app:SE} in the
supplementary material.

We can use the expressions 
to compute a variety of relevant metrics.
For example, the $PL(2)$ convergence 
shows that the MSE on the parameter
estimate is
\beq \label{eq:wmse}
    \lim_{N \rightarrow \infty}
    \frac{1}{N} \sum_{n=1}^N
        (w_n^0-\what_n)^2 = 
    \Exp(W^0-\What)^2.
\eeq
The expectation on the right hand side
of \eqref{eq:wmse} can then be computed
via integration over the joint density
of $(W^0,\What)$ from part (b).  
In this way, we have a simple and exact
method to compute the parameter error.
Other metrics such as parameter 
bias or variance, cosine angle or sparsity
detection can also be computed.

Part (c) of Theorem \ref{thm:thm1_complete}
similarly exactly characterizes the asymptotic generalization error. In this case, 
we would compute the expectation over 
the three variables $(Z,\Zhat,D)$.
In this way, we have provided a methodology
for exactly predicting the
generalization error from 
the key parameters of the problems
such as the sampling ratio
$\beta=p/N$,
the regularizer, 
the output function, 
and the distributions of the true weights and noise.
We provide several
examples such as linear regression, logistic regression and SVM in the Appendix \ref{app:special_cases}.
We also recover the result by \cite{hastie2019surprises} in Appendix \ref{app:special_cases}.

\paragraph{Remarks on Assumptions.}
Note that Assumption~\ref{as:continuity} is satisfied 
in many practical cases.
For example, it can be verified that it is satisfied
in the case when $f_{\rm in}(\cdot)$ and $f_{\rm out}(\cdot)$
are convex. 
Assumption \ref{as:conv} is somewhat more restrictive in
that it requires that the ML-VAMP algorithm converges.
The convergence properties of ML-VAMP are discussed in
\cite{fletcher2016expectation}. 
The ML-VAMP algorithm may not always converge, and characterizing conditions under which convergence is possible is an open question.
However, experiments in \cite{rangan2019vamp} show that
the algorithm does indeed often converge,
and in these cases, our analysis applies.
In any case, we will see below that the predictions
from Theorem~\ref{thm:thm1_complete} agree closely
with numerical experiments in several relevant cases.

In some special cases equation \eqref{eq:generalization_main_result} simplifies to yield quantitative insights for interesting modeling artifacts.
We discuss these in Appendix \ref{app:special_cases} in the supplementary material.

\section{Experiments}

\paragraph*{Training and Test Distributions.}
We validate our theoretical results on a number of synthetic
data experiments.  For all the experiments,  
the training and test data is generated 
following the model in 
Section~\ref{sec:lsl}.  We generate the training
and test eigenvalues as i.i.d.\ with lognormal distributions,
\[
    S_{\rm tr}^2 = A (10)^{0.1 u_{\rm tr}}, \quad
    S_{\rm ts}^2 = A (10)^{0.1 u_{\rm ts}}, 
\]
where $(u_{\rm tr},u_{\rm ts})$ are bivariate zero-mean Gaussian with 
\[
    \mathrm{cov}(u_{\rm tr},u_{\rm ts}) = \sigma^2_u\left[
        \begin{array}{cc} 1 & \rho \\ \rho & 1 \end{array} 
        \right].
\]
In the case when $\sigma^2_u=0$, we obtain eigenvalues
that are equal, corresponding to the i.i.d.\ case.
With $\sigma^2_u > 0$ we can model correlated features.
Also, when the correlation coefficient $\rho=1$, $S_{\rm tr}=S_{\rm ts}$, so there is
no training and test mismatch.   
However, we can also select $\rho < 1$ to
experiment with cases when 
the training and test distributions differ.
In the examples below, we consider the following three cases:
\begin{enumerate}[label=(\arabic*),parsep=0pt,topsep=0pt]
\item i.i.d.\ features ($\sigma_u=0$);
\item correlated features with matching training and test 
distributions ($\sigma_u = 3$~dB,
$\rho=1$); and
\item correlated features with train-test mismatch ($\sigma_u = 3$~dB,
$\rho=0.5$).
\end{enumerate}
For all experiments below, the true model coefficients are generated as i.i.d.\ 
Gaussian  $w_j^0 \sim \Norm(0,1)$ and we use standard L2-regularization,
$f_{\rm in}(w)=\lambda w^2/2$ for some $\lambda > 0$.
Our framework can incorporate arbitrary i.i.d.\ distributions on $w_j$
and regularizers, but we will illustrate just the Gaussian case 
with L2-regularization here.

\paragraph*{Under-regularized linear regression.}
We first consider the case of under-regularized
linear regression where the output channel is $\phi_{\rm out}(p,d) = p + d$
with $d \sim \Norm(0,\sigma^2_d)$.  The noise variance $\sigma^2_d$ is set
for an SNR level of 10~dB.  We use a standard mean-square error (MSE) output loss, 
$f_{\rm out}(y,p)= (y-p)^2/(2\sigma^2_d)$.  Since we are using the 
L2-regularizer, $f_{\rm in}(w)=\lambda w^2/2$, the minimization \eqref{eq:whatmin}
is standard ridge regression.  
Moreover, if we were to select $\lambda = 1/\Exp(w_j^0)^2$,
then the ridge regression estimate would correspond to the minimum mean-squared error (MMSE) estimate
of the coefficients $\wbf^0$.
However, to study the under-regularized regime,
we take $\lambda = (10)^{-4}/\Exp(w_j^0)^2$.

Fig.~\ref{fig:linear} plots the test MSE 
for the three cases described above
for the linear model.
In the figure, we take $p=1000$ features and vary the number of samples $n$ from $0.2p$
(over-parametrized) to $3p$ (under-paramertrized).
For each value of $n$,
we take 100 random instances of the model and compute the ridge regression 
estimate using the sklearn package and measure the test MSE
on the 1000 independent test samples.  The simulated values in Fig.~\ref{fig:linear}
are the median test error over the 100 random trials.  The test MSE is plotted in a
normalized dB scale,
\[
    \mbox{Test MSE (dB)}=
    10\log_{10}\left( 
    \frac{\Exp(\wh{y}_{\rm ts}-y_{\rm ts})^2}
    {\Exp y_{\rm ts}^2} \right).
\]
Also plotted is the 
state evolution (SE) theoretical test MSE from Theorem~\ref{thm:thm1_complete}.

\begin{figure}[!t]
    \centering
    \includegraphics[width=0.5\textwidth]{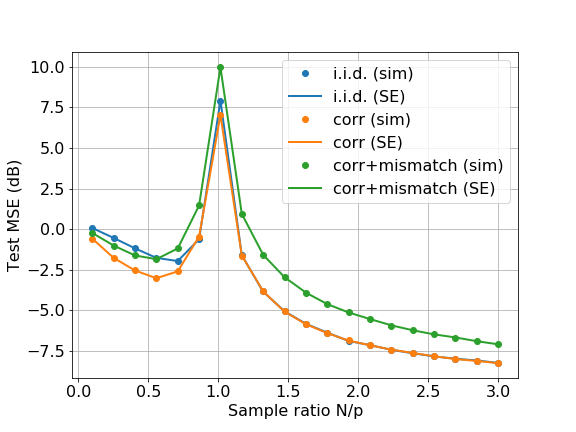}
    \caption{Test error for under-regularized
    linear regression under various train and test distributions}
    \label{fig:linear}
\end{figure}

In all three cases in Fig.~\ref{fig:linear}, the SE theory exactly matches
the simulated values for the test MSE\@.
Note that the case of match training and test 
distributions for this problem was studied in \cite{hastie2019surprises,mei2019generalization,montanari2019generalization}
and we see the \emph{double descent} phenomenon described in their work.
Specifically, with highly under-regularized linear regression,
the test MSE actually \emph{increases} with more samples $n$ in the over-parametrized
regime ($n/p < 1$) and then decreases again in the under-parametrized regime
($n/p > 1$).  

Our SE theory can also provide predictions
for the correlated feature case.
In this particular setting, 
we see that in the correlated case the 
test error is slightly lower in the over-parametrized regime since the
energy of data is concentrated in a smaller sub-space.  Interestingly,
there is minimal difference between the correlated and i.i.d.\ cases
for the under-parametrized regime when the training and test data match.
When the training and test data are not matched, the test error increases.
In all cases, the SE theory can accurately predict these effects.

\begin{figure}[!t]
    \centering
    \includegraphics[width=0.5\textwidth]{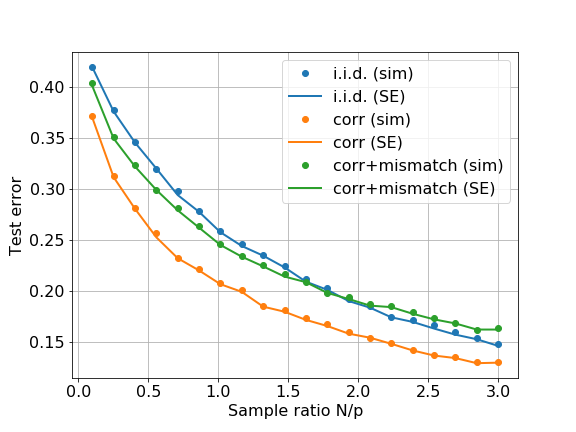}
    \caption{Classification
    error rate with
    logistic regression under various train and test distributions}
    \label{fig:logistic}
\end{figure}

\paragraph*{Logistic Regression.}
Fig.~\ref{fig:logistic} shows a similar plot as Fig.~\ref{fig:linear}
for a logistic model.
Specifically, we use a logistic output
$P(y=1)=1/(1+e^{-p})$,
a binary cross entropy output loss 
$f_{\rm out}(y,p)$, and
$\ell_2$-regularization level $\lambda$ so that the
output corresponds to the MAP estimate
(we do not perform ridgeless regression in this case).
The mean of the training and test eigenvalues
$\Exp S_{\rm tr}^2=\Exp S_{\rm ts}^2$
are selected such that, if the true coefficients $\wbf^0$ were known,
we could obtain a 5\% prediction error.
As in the linear case,
we generate random instances of the model,
use the sklearn package to perform the logistic regression, and
evaluate the estimates on 1000 new test samples.
We compute the median error rate ($1-$ accuracy) and compare 
the simulated values with the SE theoretical estimates.
The i.i.d.\ case was considered in \cite{salehi2019impact}.
Fig.~\ref{fig:logistic} shows that our SE theory is able to predict
the test error rate exactly in i.i.d.\ cases along with a correlated
case and a case with training and test mismatch.

\begin{figure}
    \centering
    \includegraphics[width=0.5\textwidth]{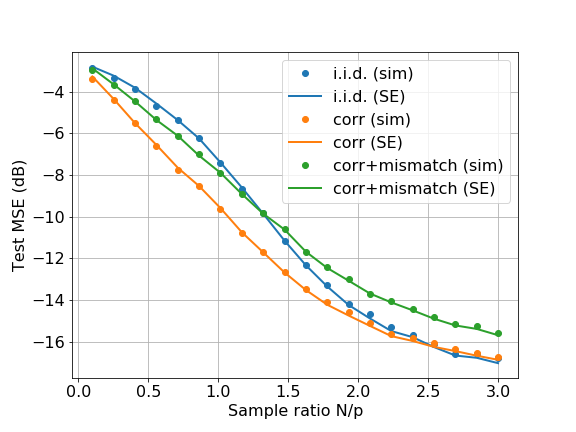}
    \caption{Test MSE under a non-linear least square estimation.}
    \label{fig:nonlin}
\end{figure}

\paragraph*{Nonlinear Regression.}
The SE framework can also consider non-convex problems.
As an example, we consider a non-linear regression problem where the 
output function is
\beq \label{eq:out_tanh}
    \phi_{\rm out}(p,d) = \tanh(p) + d, 
    \quad d \sim \Norm(0,\sigma^2_d).
\eeq
The $\tanh(p)$ models saturation in the output.  Corresponding to this output,
we use a non-linear MSE output loss
\beq \label{eq:fout_tanh}
    f_{\rm out}(y-p) = \frac{1}{2\sigma^2_d}(y-\tanh(p))^2. 
\eeq
This output loss is non-convex.  
We scale the 
data matrix so that the input $\Exp(p^2)=9$ so that the $\tanh(p)$
is driven well into the non-linear regime.  We also take $\sigma^2_d = 0.01$.

For the simulation, the non-convex loss is minimized using Tensorflow 
where the non-linear model is described as a two-layer model.  
We use the ADAM optimizer \cite{kingma2014adam} with 200 epochs
to approach a local minimum of the objective \eqref{eq:whatmin}.
Fig.~\ref{fig:nonlin} plots the median test MSE for the estimate 
along with the SE theoretical test MSE\@.
We again see that the SE theory is able to predict
the test MSE in all cases even for this non-convex problem. %
\section{Conclusions}

In this paper we provide a procedure
for exactly computing the asymptotic generalization error of a solution in a generalized linear model (GLM).
This procedure is based on scalar quantities which are fixed points of a recursive iteration.
The formula holds for a large class of generalization metrics, loss functions, and regularization schemes.
Our formula allows analysis of important modeling effects such as \begin{enumerate*}[label=(\roman*)]
    \item overparameterization,
    \item dependence between covariates, and
    \item mismatch between train and test distributions,
\end{enumerate*}
which play a significant role in the analysis and design of machine learning systems. We experimentally validate our theoretical results for linear as well as non-linear regression and logistic regression, where a strong agreement is seen between our formula and simulated results.

\bibliography{ref}
\bibliographystyle{icml2020.bst}

\clearpage
\appendix
\icmltitlerunning{Supplementary Materials: Generalization Error of GLMs in High Dimensions}
\section{Empirical Convergence of Vector Sequences}
\label{app:definitions}

The LSL model in Section~\ref{sec:lsl} and our main
result in Section~\ref{sec:main_result} require 
certain technical definitions.

\begin{definition}[Pseudo-Lipschitz continuity]
\label{def:pl_cont}
For a given $p\geq 1$, a function $\fbf:\Real^{d}\rightarrow \Real^{m}$ is called Pseudo-Lipschitz of order $p$ if
\begin{align}
    \MoveEqLeft \|\fbf(\xbf_1)-\fbf(\xbf_2)\| \nonumber \\&
    \leq C\|\xbf_1-\xbf_2\|\left(1+\|\xbf_1\|^{p-1}+\|\xbf_2\|^{p-1}\right)
\end{align}
for some constant $C > 0$.  
\end{definition}

Observe that for $p=1$, the pseudo-Lipschitz is equivalent to
the standard definition of Lipschitz continuity.

\begin{definition}[Uniform Lipschitz continuity] Let $\phibf(\xbf,\theta)$ be a function on $\rbf \in \R^d$ and $\theta \in \R^s$.
We say that $\phibf(\xbf,\theta)$ is \emph{uniformly Lipschitz continuous} in $\xbf$ at $\theta=\thetabar$ if there exists constants $L_1,L_2 \geq 0$ and an open neighborhood $U$ of $\thetabar$ such that
\begin{align} \label{eq:unifLip1}
    \|\phibf(\xbf_1,\theta)-\phibf(\xbf_2,\theta)\| \leq L_1\|\xbf_1-\xbf_2\|
\end{align}
for all
$\xbf_1,\xbf_2 \in \R^d$ and $\theta \in U$; and
\begin{align}
    \|\phibf(\xbf,\theta_1)-\phibf(\xbf,\theta_2)\| \leq L_2\left(1+\|\xbf\|\right)
    \|\theta_1-\theta_2\|,
\end{align}
for all $\xbf \in \R^d$ and $\theta_1,\theta_2 \in U$.
\end{definition}

\newcommand{\PLT}{\rm PL(2)}
\begin{definition}[Empirical convergence of a sequence]
\label{def:plp_convergence} Consider a sequence of vectors $\xbf(N) = \{\xbf_n{(N)}\}_{n=1}^N$ with $\xbf_n(N)\in\Real^{d}$. So, each $\xbf(N)$ is a block vector
with a total of $Nd$ components.  
For a finite $p\geq 1$, we say that the vector sequence $\xbf(N)$ converges empirically with $p$-th order moments if there exists a random variable $X \in \Real^d$ such that
\begin{enumerate}[label=(\roman*)]
\item $\Exp\|X\|_p^p < \infty$; and
\item for any $f : \R^d \arr \R$ that is pseudo-Lipschitz continuous of order $p$,
\beq \label{eq:PLp-empirical}
    \lim_{N \arr \infty} \tfrac{1}{N} \sum_{n=1}^N f(\xbf_n(N)) = \Exp\left[ f(X) \right].
\eeq
\end{enumerate}
\end{definition}

In this case, with some abuse of notation, we will write
\beq \label{eq:plLim}
    \lim_{N \arr \infty} \left\{ \xbf_n \right\} \stackrel{PL(p)}{=} X,
\eeq
where we have omitted the dependence on $N$ in $\xbf_n(N)$.
We note that the sequence $\{\xbf{(N)}\}$ can be random or deterministic. If it is random, we will require that
for every pseudo-Lipschitz function $f(\cdot)$,
the limit \eqref{eq:PLp-empirical} holds almost surely.
In particular, if $\xbf_n \sim X$ are i.i.d.\ and $\Exp\|X\|^p_p < \infty$, then $\xbf$ empirically converges to $X$ with $p^{\rm th}$ order moments. 

$PL(p)$ convergence is equivalent to weak convergence plus convergence in $p^{\rm}$ moment \cite{BayatiM:11}, and hence $PL(p)$ convergence is also equivalent to convergence in Wasserstein-$p$ metric (See Chapter 6. \cite{villani2008optimal}). We use this fact later in proving Theorem \ref{thm:thm1_complete}. %
\section{ML-VAMP Denoisers Details} \label{app:ml_vamp_details}

Related to $\Sbf_\mp$ and $\sbf_\mp$ from equation \eqref{eq:Usvd}, we need to define two quantities $\sbf_\mp^+ \in\Real^N$ and $\sbf_\mp^- \in \Real^p$ that are zero-padded versions of the singular values $\sbf_{\mp}$, so that for $n > \min\{N,p\}$,
we set $s^\pm_{{\rm mp},n}=0$.
Observe that $(\sbf_\mp^+)^2$ are eigenvalues of $\Ubf\Ubf\tran$ whereas $(\sbf_\mp^-)^2$ are eigenvalues of $\Ubf\tran\Ubf$. Since $\sbf_\mp$ empirically converges to $S_\mp$ as given in \eqref{eq:smp_lim}, 
the vector $\sbf_\mp^+$ empirically converges to random variable $S_\mp^+$ whereas the vector $\sbf_\mp^-$ empirically converges to random variable $S_\mp^-$, where a mass is placed at $0$ appropriately. Specifically, $S_\mp^+$ has a point mass of $(1-\beta)_+\delta_{\{0\}}$ when $\beta<1$, whereas $S_\mp^-$ has a point mass of $(1-\frac1\beta)_+\delta_{\{0\}},$ when $\beta>1.$  
In Appendix \ref{app:mp_dist} (eqn.  \eqref{eq:MP_density}), we provide the densities over positive parts of $S_\mp^+$ and $S_\mp^-$.

A key property of our analysis will be that 
the non-linear functions \eqref{eq:phib_ml}
and the denoisers $\gbf_\ell^{\pm}(\cdot)$
have simple forms.
 
\noindent
\underline{Non-linear functions $\phibf_\ell(\cdot)$:}
The non-linear functions all 
act \emph{componentwise}.  
For example, for $\phibf_1(\cdot)$ in \eqref{eq:phib_ml},
we have
\begin{align*}
    \MoveEqLeft 
    \zbf_1 = \phibf_1(\pbf_0,\sbf_{\rm tr})  = 
    \mathrm{diag}(\sbf_{\rm tr})\pbf_0
     \Longleftrightarrow
    z_{1,n} = \phi_1(p_{0,n},s_{{\rm tr},n}),
\end{align*}
where $\phi_1(\cdot)$ is the scalar-valued function,
\beq \label{eq:phi1_def}
    \phi_1(p_0,s) = sp_0. 
\eeq
Similarly, for $\phibf_2(\cdot)$,
\begin{align*}
    \MoveEqLeft 
    \zbf_2 = \phibf_2(\pbf_1,\sbf^+_{\rm mp})  
    \Longleftrightarrow
    z_{2,n} = \phi_2(\wb{p}_{1,n},{s}^+_{{\rm mp},n}),\quad n<N
\end{align*}
where $\wb \pbf_1 \in\Real^N$ is the zero-padded version of $\pbf_1$, and
\beq \label{eq:phi2_def}
    \phi_2(p_1,s) = s\, p_1.
\eeq
Finally, the function $\phibf_3(\cdot)$
in \eqref{eq:phib_ml} acts componentwise with 
\beq \label{eq:phi3_def}
    \phi_3(p_2,d) = \phi_{\rm out}(p_2,d).
\eeq

\noindent
\underline{Input denoiser $\gbf_0^+(\cdot)$:}
Since $F_0(\zbf_0) = F_{\rm in}(\zbf_0)$, 
and $F_{\rm in}(\cdot)$ given in \eqref{eq:Fsep},
the denoiser \eqref{eq:Gb0_def} acts 
\emph{componentwise} in that,
\[
    \zbfhat_0 = \gbf_0^+(\rbf_0^-,\gamma_0^-) 
    \Longleftrightarrow
    \zhat_{0,n} = g_0^+(r_{0,n}^-,\gamma_0^-),
\]
where $g_0^+(\cdot)$ is the scalar-valued function,
\beq \label{eq:G0def_app}
    g_0^+(r_0^-,\gamma_0^-) %
        := \argmin_{z_0}\ f_{\rm in}(z_0) 
        + \frac{\gamma_0^-}{2}(z_0 - r_0^-)^2.
\eeq
Thus, the vector optimization in \eqref{eq:Gb0_def}
reduces to a set of scalar optimizations \eqref{eq:G0def_app}
on each component.  

\noindent
\underline{Output denoiser $\gbf_3^-(\cdot)$:}
The output penalty $F_3(\pbf_2,\ybf) = F_{\rm out}(\pbf_2,\ybf)$
where $F_{\rm out}(\pbf_2,\ybf)$ has the separable
form \eqref{eq:Fsep}.  Thus, similar to the case
of $\gbf_0(\cdot)$,
the denoiser $\gbf_3(\cdot)$
in \eqref{eq:Gb3_def} also acts componentwise with
the function,
\begin{align}
    \MoveEqLeft g_3^-(r_2^+,\gamma_2^+,y):= \argmin_{p_2}\  f_{\rm out}(p_2,y)  
        + \tfrac{\gamma_2^+}{2}(p_2 - r_2^+)^2.
        \label{eq:G3def}
\end{align}

\noindent
\underline{Linear denoiser $\gbf_1^{\pm}(\cdot)$:} The expressions for both denoisers $g_1^\pm$ and $g_2^\pm$ are very similar and can be explained together.
The penalty $F_1(\cdot)$ restricts $\zbf_1 = \Sbf_{\rm tr}\pbf_0$, where $\Sbf_\tr$ is a square matrix.
Hence, for $\ell=1,$ the minimization in
\eqref{eq:pz_min} is given by,
\begin{align}
    \MoveEqLeft \pbfhat_{0}
        := \argmin_{\pbf_{0}}\   \tfrac{\gamma_{0}^+}{2}\|\pbf_{0} - \rbf_{0}^+\|^2 %
       + \tfrac{\gamma_{1}^-}{2}\|\Sbf_{\rm tr}\pbf_{0} - \rbf_{1}^-\|^2
        , \label{eq:pmin3}
\end{align} 
and $\zbfhat_1 = \Sbf_{\rm tr}\pbfhat_{0}$.
This is a simple quadratic minimization and the
components of $\pbfhat_{0}$ and $\zbfhat_1$ are given
by
\begin{align}  
    \phat_{0,n} &= g_1^-(r_{0,n}^+,r_{1,n}^-,\gamma_{0}^+,
    \gamma^-_1,s_{{\rm tr},n}) \nonumber \\
    \zhat_{1,n} &= g_1^+(r_{0,n}^+,r_{1,n}^-,\gamma_{0}^+,
    \gamma^-_1,s_{{\rm tr},n}), \nonumber
\end{align}
where
\begin{subequations} 
\begin{align} \label{eq:G23def}
     g_1^-(r_{0}^+,r_{1}^-,\gamma_{0}^+,
    \gamma^-_1,s)
    &:= 
    \frac{\gamma^+_{0}r^+_{0} + s\gamma^-_{1}r^-_{1}}{\gamma^+_{0} + s^2\gamma^-_{1}}
    \\
     g_1^+(r_{0}^+,r_{1}^-,\gamma_{0}^+,
    \gamma^-_1,s)
    &:= 
    \frac{s(\gamma^+_{0}r^+_{0} + s\gamma^-_{1}r^-_{1})}{\gamma^+_{0} + s^2\gamma^-_{1}}\label{eq:g_lp}
\end{align}
\end{subequations}

\underline{Linear denoiser $\gbf_2^{\pm}(\cdot)$:}
This denoiser is identical to the case $\gbf_1^{\pm}(\cdot)$
in that we need to impose the linear constraint $\zbf_2 = \Sbf_{\rm mp}\pbf_1$. However $\Sbf_\mp$ is in general a rectangular matrix and the two resulting cases of $\beta\lessgtr1$ needs to be treated separately. 

Recall the definitions of vectors $\sbf_\mp^+$ and $\sbf_\mp^-$ at the beginning of this section. 
Then, for $\ell=2$,
with the penalty $F_2(\pbf_1,\zbf_2)=\delta_{\{\zbf_2=\Sbf_{\rm mp}\pbf_1\}}$, 
the solution to \eqref{eq:pz_min} has components,
\begin{subequations}
\begin{align}  
    \phat_{1,n} &= g_2^-(r_{1,n}^+,r_{2,n}^-,\gamma_{1}^+,
    \gamma^+_2,s^-_{{\rm mp},n}) \label{eq:p2G} \\
    \zhat_{2,n} &= g_2^+(r_{1,n}^+,r_{2,n}^-,\gamma_{1}^+,
    \gamma^+_2,s^+_{{\rm mp},n}), \label{eq:z3G}
\end{align}
\end{subequations}
with the identical functions $g_2^-=g_1^-$ and $g_2^+=g_1^+$ as given by \eqref{eq:G23def} and \eqref{eq:g_lp}.
Note that in \eqref{eq:p2G}, $n=1,\ldots,p$
and in \eqref{eq:z3G}, $n=1,\ldots,N$.

\section{State Evolution Analysis of ML-VAMP}
\label{app:SE}
A key property of the ML-VAMP algorithm 
is that its 
performance  in the LSL can be exactly 
described by a \emph{scalar equivalent system}. 
In the scalar equivalent system, the vector-valued outputs 
of the algorithm  are replaced by 
scalar random variables representing the
typical behavior of the components of the vectors
in the large-scale-limit (LSL).
Each of the random variables are described by a set
of  parameters, where the parameters are given
by a set of deterministic equations called the 
\emph{state evolution} or SE.  

\begin{algorithm}
\caption{SE for ML-VAMP for GLM Learning}
\begin{algorithmic}[1]  \label{algo:ml_vamp_se}

\STATE{// \texttt{Initial} }
\STATE{Initialize $\gammabar^{-}_{0\ell}=\gamma_{0\ell}^-$ from Algorithm \ref{algo:ml_vamp}.}
\STATE{$Q^-_{0\ell}\sim \mc N(0,\tau_{0\ell}^-)$ for some $\tau_{0\ell}^->0$ for $\ell=0,1,2$}  \label{line:init_se}
\STATE{$Z^0_0 = W^0$} \label{line:z0init}
\FOR{$\ell=0,\ldots,\Lm1$}
    \STATE{$P^0_\ell = \Norm(0,\tau^0_\ell), \quad \tau^0_\ell = \mathrm{var}(Z^0_\ell)$ }
    \STATE{$Z^0_{\lp1} = \phi_{\lp1}(P^0_{\ell},\Xi_{\lp1})$ }
      \label{line:zinit_se}
\ENDFOR
\STATE{}
\FOR{$k=0,1,\dots$}
    \STATE{// \texttt{Forward Pass} }
    \FOR{$\ell=0,\ldots,L-1$}
        \IF{$\ell=0$}
        \STATE{$R^-_{k0}=Z^0_{\ell}+Q^-_{k0}$}
        \label{line:rp0_se}
        \STATE{$\Zhat_{k0} = g^+_{0}(R^-_{k0},\gammabar^-_{k0})$}  \label{line:zhatp0_se}
        \ELSE
         \STATE{$R^+_{k,\lm1}=P^0_{\lm1}+P^+_{k,\lm1}$,
      $R^-_{k\ell}=Z^0_{\ell}+Q^-_{k\ell}$}
        \label{line:rp_se}
         \STATE{$\Zhat_{k\ell} = g^+_{\ell}(R^+_{k,\lm1},R^-_{k\ell},\gammabar^+_{k,\lm1},\gammabar^-_{k\ell},\Xi_\ell)$}  \label{line:zhatp_se}
        \ENDIF
    \STATE{$\alphabar_{k\ell}^+=\Exp\partial \Zhat_{k\ell}/ \partial Q_{k\ell}^-$} \label{line:alphap_se}
    \STATE{$\displaystyle Q_{k\ell}^+ =
        \frac{\Zhat_{k\ell}-Z^0_\ell-\alphabar_{k\ell}^+Q_{k\ell}^-}{1-\alphabar^+_{k\ell}}$}  \label{line:qp_se}
    \STATE{$\gammabar_{k\ell}^+ = (\tfrac{1}{\alphabar_{k\ell}^+}-1)\gammabar_{k\ell}^-$}
        \label{line:gamp_se}
    \STATE{$(P^0_\ell,P^+_{k\ell}) \sim \Norm(0,\Kbf_{k\ell}^+), ~
    \Kbf_{k\ell}^+ = \mathrm{cov}(Z^0_\ell,Q^+_{k\ell})$} \label{line:Kp_se}
    \ENDFOR
    \STATE{}

    \STATE{// \texttt{Backward Pass} }
 
    \FOR{$\ell=L,\ldots,1$}
       \IF{$\ell=L$}
     \STATE{$R^+_{k,\Lm1}=P^0_{\Lm1}+P^+_{k,\Lm1}$}
        \label{line:rn0_se}
       \STATE{$\Phat_{k,\Lm1} = g_L^-(R^+_{k,\Lm1},\gammabar^+_{k,\Lm1},
        Z^0_L)$} \label{line:phatn0_se}
        \ELSE
      \STATE{$R^+_{k,\lm1}=P^0_{\lm1}+P^+_{k,\lm1}$,
      $R^-_{\kp1,\ell}=Z^0_{\ell}+Q^-_{\kp1,\ell}$}
        \label{line:rn_se}
      \STATE{$\Phat_{k,\lm1} = g_\ell^-(R^+_{k,\lm1},R^-_{\kp1,\ell},
        \gammabar^+_{k,\lm1},\gammabar^-_{\kp1,\ell},\Xi_{\ell})$} \label{line:phatn_se}
        \label{line:phatn_se}
        \ENDIF
        \STATE{$\alphabar_{k,\lm1}^-=\Exp\partial \Phat_{k,\lm1}/ \partial P_{k,\lm1}^+$} \label{line:alphan_se}
        \STATE{$\displaystyle P_{\kp1,\lm1}^- =
       \frac{\Phat_{k,\lm1}-P^0_{\lm1}-\alphabar_{k,\lm1}^-P_{k,\lm1}^+}{1-\alphabar^-_{k,\lm1}}$}
        \label{line:pn_se} 
        \STATE{$\gammabar_{\kp1,\lm1}^- = (\tfrac{1}{\alphabar_{k,\lm1}^-}-1)\gammabar_{k,\lm1}^+$}
        \label{line:gamn_se}
        \STATE{$Q^-_{\kp1,\lm1} \sim \Norm(0,\tau_{\kp1,\lm1}^-)$,
        $\tau_{k,\lm1}^- = \Exp(P^-_{\kp1,\lm1})^2$}
        \label{line:qn_se}
    \ENDFOR

\ENDFOR

\end{algorithmic}
\end{algorithm} 
The SE for the general ML-VAMP algorithm are derived in
\cite{pandit2019asymptotics} and the special case of the updates
for ML-VAMP for GLM learning are shown in
Algorithm~\ref{algo:ml_vamp_se} with details of functions $\gbf_\ell^\pm$ in Appendix \ref{app:ml_vamp_details}. 
We see that the SE
updates in Algorithm~\ref{algo:ml_vamp_se} parallel
those in the ML-VAMP algorithm  Algo.~\ref{algo:ml_vamp},
except that vector quantities such as $\zbfhat_{k\ell}$, $\pbfhat_{k\ell}$, $\rbf_{k\ell}^+$ and $\rbf_{k\ell}^-$
are replaced by scalar random variables such as $\Zhat_{k\ell}$, $\Phat_{k\ell}$, $R_{k\ell}^+$ and $R_{k\ell}^-$.
Each of these random variables are described by the 
deterministic parameters such as $\Kbf_{k\ell}\in\Real^{2\times 2}_{\succ 0}$, and $\tau^0_\ell$,
$\tau^-_{k\ell} \in\Real_+$.

The updates in the section labeled as ``Initial'',
provide the scalar equivalent model for the true system \eqref{eq:sig_flow_true}.
In these updates, $\Xi_\ell$ represent the limits of the 
vectors $\xibf_\ell$ in \eqref{eq:xidef}.  That is,
\beq \label{eq:xi_rv_def}
    \Xi_1 := S_{\rm tr}, \quad 
    \Xi_2 := S_{\rm mp}^+, \quad
    \Xi_3 := D.
\eeq
Due to assumptions in Section~\ref{sec:lsl},
we have that the components of $\xibf_\ell$
converge empirically as,
\beq \label{eq:xilim}
    \lim_{N \rightarrow \infty} \{ \xi_{\ell,i} \}
        \stackrel{PL(2)}{=} \Xi_\ell,
\eeq
So, the $\Xi_\ell$ represent the asymptotic
distribution of the components of the vectors $\xibf_\ell$.

The updates in sections labeled ``Forward pass"
and ``Backward pass" in the SE equations in Algorithm~\ref{algo:ml_vamp_se}
parallel those in Algorithm~\ref{algo:ml_vamp}.
The key quantities in these SE equations are the 
error variables,
\[
    \pbf_{k\ell}^+ := \rbf_{k\ell}^+-\pbf^0_\ell, \quad
    \qbf_{k\ell}^- := \rbf_{k\ell}^--\zbf^0_\ell, \quad
\]
which represent the errors of the estimates to the inputs
of the denoisers.  We will also be interested in their
transforms,
\[
    \qbf_{k\ell}^+ = \Vbf_{\ell}\tran \pbf_{k,\lp1}^+,
    \quad
    \pbf_{k\ell}^- = \Vbf_\ell\qbf_{k\ell}^-.
\]
The following Theorem is an adapted version of the main result from 
\cite{pandit2019asymptotics} to the iterates of Algorithms \ref{algo:ml_vamp} and \ref{algo:ml_vamp_se}.

\begin{theorem} \label{thm:se_gen}
Consider the outputs of the ML-VAMP
for GLM Learning Algorithm under the assumptions of 
Section~\ref{sec:lsl}.  Assume the denoisers satisfy
the continuity conditions in Assumption~\ref{as:continuity}.
Also, assume that the outputs of the SE satisfy
\[
    \alphabar_{k\ell}^{\pm} \in (0,1),
\]
for all $k$ and $\ell$.  Suppose Algo. \ref{algo:ml_vamp} is initialized so that the following convergence holds
\begin{align*}
\lim_{N\rightarrow\infty}
    \{\rbf^-_{0\ell}-\zbf_\ell^0\}\overset{PL(2)}=Q_{0\ell}^-
\end{align*}
where $(Q_{00}^-,Q_{01}^-,Q_{02}^-)$ are independent zero-mean Gaussians, independent of $\{\Xi_\ell\}$.
Then, 
\begin{enumerate}[label=(\alph*),topsep=0pt]
    \item  For any fixed iteration
$k \geq 0$ in the forward direction
and layer $\ell=1,\ldots,\Lm1$,
we have that, almost surely,
\begin{align}  \label{eq:alpha_lim_fwd}
    \MoveEqLeft\lim_{N \rightarrow \infty} 
    (\gamma_{k,\lm1}^+,\gamma_{k\ell}^-)
    = (\gammabar_{k,\lm1}^+,\gammabar_{k\ell}^-),\quad{\rm and},\\
    \MoveEqLeft \lim_{N \rightarrow \infty} \{(\zbfhat^+_{k\ell},\zbf^0_\ell,\pbf_{\lm1}^0,\rbf^+_{k,\lm1},
    \rbf^-_\ell) \}  \nonumber \\
    &\overset{PL(2)}= (\Zhat^+_{k\ell},Z^0_\ell,P_{\lm1}^0,R^+_{k,\lm1},
    R^-_\ell) \label{eq:var_lim_fwd}
\end{align}
where the variables on the right-hand side are from the SE
equations \eqref{eq:alpha_lim_fwd} and 
\eqref{eq:var_lim_fwd} are the outputs of the SE equations 
in Algorithm~\ref{algo:ml_vamp_se}.
Similar equations hold for $\ell=0$ with the appropriate
variables removed.

    \item  Similarly, in the reverse direction,
    For any fixed iteration
$k \geq 0$ and layer $\ell=1,\ldots,L-2$,
we have that, almost surely,
\begin{align}  \label{eq:alpha_lim_rev}
    \MoveEqLeft\lim_{N \rightarrow \infty} 
    (\gamma_{k,\lm1}^+,\gamma_{\kp1,\ell}^-)
    = (\gammabar_{k,\lm1}^+,\gammabar_{\kp1,\ell}^-),\quad {\rm and}\\
    \MoveEqLeft \lim_{N \rightarrow \infty} \{(\pbfhat^+_{\kp1,\lm1},\zbf^0_\ell,\pbf_{\lm1}^0,\rbf^+_{k,\lm1},
    \rbf^-_{\kp1,\ell}) \}  \nonumber \\
   \MoveEqLeft\quad\overset{PL(2)}= (\Phat^+_{\kp1,\lm1},Z^0_\ell,P_{\lm1}^0,R^+_{k,\lm1},
    R^-_{\kp1,\ell}). \label{eq:var_lim_rev}
\end{align}
\end{enumerate}
Furthermore, $(P_{\lm1}^0,P_{k\lm1}^+)$ and $Q_{k\ell}^-$ are independent.
\end{theorem}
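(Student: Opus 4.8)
The plan is to obtain Theorem~\ref{thm:se_gen} as a specialization of the general multi-layer VAMP state-evolution result of \cite{pandit2019asymptotics} to the signal-flow graph \eqref{eq:sig_flow_true}--\eqref{eq:sig_flow_est}. First I would verify that the GLM learning problem is a bona fide instance of the framework to which that result applies. This requires checking four structural facts: (i) each nonlinearity $\phibf_\ell$ and each denoiser $\gbf_\ell^\pm$ acts componentwise --- established in Appendix~\ref{app:ml_vamp_details}, where \eqref{eq:phi1_def}--\eqref{eq:phi3_def} and \eqref{eq:G0def_app}--\eqref{eq:g_lp} give the scalar forms; (ii) the orthogonal factors $\Vbf_0,\Vbf_1,\Vbf_2$ are mutually independent and Haar-distributed --- $\Vbf_0$ by assumption and $\Vbf_1,\Vbf_2$ from the SVD \eqref{eq:Usvd} of the i.i.d.\ Gaussian $\Ubf$; (iii) the side-information vectors $\xibf_\ell$ and the planted signal $\wbf^0$ converge empirically in $PL(2)$ to $\Xi_\ell$ and $W^0$ --- this is exactly \eqref{eq:w0lim}, \eqref{eq:Slim}, \eqref{eq:smp_lim}, collected in \eqref{eq:xilim}; and (iv) the denoisers satisfy the uniform-Lipschitz hypotheses of Assumption~\ref{as:continuity}. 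Once these are in place, the forward/backward $PL(2)$ conclusions \eqref{eq:alpha_lim_fwd}--\eqref{eq:var_lim_rev} are precisely the statements of the cited theorem read off for the maps $\gbf_\ell^\pm$.

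The substance of the argument lies in the general theorem, which I would present as a joint induction over the iteration index $k$ and the layer index $\ell$, with the assumed initialization $\lim_N\{\rbf^-_{0\ell}-\zbf^0_\ell\}\overset{PL(2)}{=}Q^-_{0\ell}$ serving as the base case. The central device is a \emph{Haar-conditioning} lemma: at each step where $\Vbf_\ell$ or $\Vbf_\ell\tran$ is applied, one conditions on the $\sigma$-algebra generated by all quantities computed so far, which pins $\Vbf_\ell$ only through a finite, growing collection of linear constraints of the form $\Vbf_\ell\zbf=\pbf$. Because $\Vbf_\ell$ is Haar, its conditional law is that of a uniform orthogonal matrix subject to those constraints; decomposing the fresh input into its projection onto the constrained subspace plus an orthogonal component shows that the newly revealed part of $\rbf_{k\ell}^\pm$ is asymptotically an independent Gaussian whose covariance is tracked by the SE scalars $\tau_{k\ell}^-$ and $\Kbf_{k\ell}^+$. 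The Onsager-type corrections $\alpha_{k\ell}^\pm$ on lines~\ref{line:rp} and \ref{line:rn}, together with the normalization $1/(1-\alpha_{k\ell}^\pm)$, are exactly what cancels the deterministic self-interaction contribution so that the error vectors $\pbf_{k\ell}^+=\rbf_{k\ell}^+-\pbf^0_\ell$ and $\qbf_{k\ell}^-=\rbf_{k\ell}^--\zbf^0_\ell$ remain jointly Gaussian and independent of $\{\Xi_\ell\}$ at every iteration; the hypothesis $\alphabar_{k\ell}^\pm\in(0,1)$ guarantees these normalizations are well defined and that the recursion does not degenerate.

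Two features are specific to the GLM adaptation and would need separate attention. Since $F_1$ and $F_2$ enforce the \emph{hard} linear constraints $\zbf_1=\Sbf_{\rm mp}^{\phantom{+}}\pbf_0$ and $\zbf_2=\Sbf_{\rm mp}\pbf_1$, the middle denoisers $\gbf_1^\pm,\gbf_2^\pm$ are the closed-form linear maps \eqref{eq:G23def}--\eqref{eq:g_lp}; I would check that these are uniformly Lipschitz in their arguments uniformly over the bounded $S_{\rm tr},S_{\rm mp}$ scalars, so that they fall under the componentwise-Lipschitz class required by the induction. The matrix $\Sbf_{\rm mp}$ is rectangular, so the zero-padding of its singular values into $S_{\rm mp}^+$ and $S_{\rm mp}^-$ (with point masses at $0$ of weight $(1-\beta)_+$ or $(1-\tfrac1\beta)_+$, per Appendix~\ref{app:ml_vamp_details}) must be carried through both passes so that the scalar equivalents match dimensions; this is where the $\beta\lessgtr1$ dichotomy enters. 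Finally, because $\ybf=\zbf^0_3$ is generated by the \emph{same} network, the SE must be run in matched mode: the ``Initial'' block of Algorithm~\ref{algo:ml_vamp_se} produces $Z^0_\ell,P^0_\ell$ and the forward/backward blocks track the joint laws of $(Z^0_\ell,\Zhat_{k\ell})$ and $(P^0_\ell,P^+_{k\ell})$, which is what yields the concluding independence of $(P^0_{\lm1},P^+_{k,\lm1})$ and $Q_{k\ell}^-$.

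I expect the main obstacle to be the conditioning step itself: controlling the approximation error between the true conditional distribution of $\Vbf_\ell$, subject to the accumulated linear constraints, and the idealized fresh-Gaussian model, and showing these errors vanish as $N\to\infty$ uniformly over the finitely many iterations considered. This is delicate because the constraint set grows with $k$, so one must argue that the Gram matrices of past iterates stay non-degenerate in the limit and that the $PL(2)$ test functions applied to the iterates are insensitive to the vanishing perturbations. Establishing this rigorously is the heart of \cite{pandit2019asymptotics}; in the present proof I would invoke it as the key lemma after discharging the structural verifications (i)--(iv), and then specialize its conclusion to the GLM denoisers to read off \eqref{eq:alpha_lim_fwd}--\eqref{eq:var_lim_rev}.
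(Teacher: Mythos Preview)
Your approach is essentially the same as the paper's: both obtain the result by directly invoking the general ML-VAMP state-evolution theorem of Pandit et al.\ after verifying its hypotheses for the GLM network, with the key checks being the uniform Lipschitz continuity of $\gbf_0^+,\gbf_3^-$ (supplied by Assumption~\ref{as:continuity}) and of the linear middle denoisers $\gbf_1^\pm,\gbf_2^\pm$ (supplied by boundedness of $\sbf_{\rm tr},\sbf_{\rm mp}$). Your additional sketch of the Haar-conditioning induction underlying the cited theorem is accurate but goes beyond what the paper's proof does; one small slip is that $F_1$ enforces $\zbf_1=\Sbf_{\rm tr}\pbf_0$, not $\Sbf_{\rm mp}\pbf_0$.
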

\begin{proof}
This is a direct application of Theorem 3 from \cite{pandit2019inference} to the iterations of Algorithm \ref{algo:ml_vamp}.  
The convergence result in \cite{pandit2019inference}
requires the uniform Lipschitz continuity 
of functions $\gbf_\ell^{\pm}(\cdot)$.  Assumption~\ref{as:continuity}
provides the required uniform Lipschitz continuity assumption
on $\gbf_0^+(\cdot)$ and $\gbf_3^-(\cdot)$.  For the ''middle" 
layers, $\ell=1,2$, the denoisers $\gbf_\ell^{\pm}(\cdot)$
are linear and the uniform continuity assumption is valid
since we have made the additional assumption that the
terms $\sbf_{\rm tr}$ and $\sbf_{\rm mp}$ are bounded almost surely.
\end{proof}

A key use of the Theorem is to compute
asymptotic empirical limits.  Specifically, 
for a componentwise function $\psi(\cdot)$,
let $\inner{\psi(\xbf)}$ denotes the average $\tfrac1N\sum_{n=1}^N \psi(x_n)$
The above theorem then states that for any componentwise pseudo-Lipschitz function $\psi(\cdot)$ of order 2,
as $N\rightarrow\infty$, we have the following two properties
\begin{align*}
    \MoveEqLeft
    \lim_{N \rightarrow \infty} 
    \inner{\psi(\zbfhat^+_{k\ell},\zbf^0_\ell,\pbf_{\lm1}^0,\rbf^+_{k,\lm1},
    \rbf^-_\ell)} \nonumber\\
    &=\Exp\,\psi(\Zhat^+_{k\ell},Z^0_\ell,P_{\lm1}^0,R^+_{k,\lm1},
    R^-_\ell)\\
    \MoveEqLeft
    \lim_{N \rightarrow \infty} \inner{\psi(\pbfhat^+_{\kp1,\lm1},\zbf^0_\ell,\pbf_{\lm1}^0,\rbf^+_{k,\lm1},
    \rbf^-_{\kp1,\ell)}}\\
    &=\Exp\,\psi(\Phat^+_{\kp1,\lm1},Z^0_\ell,P_{\lm1}^0,R^+_{k,\lm1},
    R^-_{\kp1,\ell}).
\end{align*}
That is, we can compute the empirical average over components
with the expected value of the random variable limit.
This convergence is key to proving Theorem \ref{thm:thm1_complete}.

\section{Empirical Convergence of Fixed Points}
\label{app:conv}

A consequence of Assumption~\ref{as:conv}
is that we can take the limit $k \rightarrow \infty$ 
of the random variables in the SE algorithm.
Specifically, let 
$\xbf_k=\xbf_k(N)$ be any set of $d$ outputs
from the ML-VAMP for
GLM Learning Algorithm under the assumptions
of Theorem~\ref{thm:se_gen}.  Under 
Assumption~\ref{as:conv}, for each $N$, there exists 
a vector 
\beq \label{eq:xlimk}
    \xbf(N) = \lim_{k \rightarrow \infty} \xbf_k(N),
\eeq
representing the limit over $k$.  For each $k$, 
Theorem~\ref{thm:se_gen} shows there also
exists a random vector limit,
\beq \label{eq:Xlimn}
    \lim_{N \rightarrow \infty} \{ \xbf_{k,i}(N) \}
    \stackrel{PL(2)}{=} X_k,
\eeq
representing the limit over $N$.  The following proposition
shows that we can take the limits of the random 
variables $X_k$.

\begin{proposition} \label{prop:conv_var}
Consider the outputs of the ML-VAMP for
GLM Learning Algorithm under the assumptions
of Theorem~\ref{thm:se_gen} and 
Assumption~\ref{as:conv}.
Let $\xbf_k=\xbf_k(N)$ be any set of $d$ outputs
from the algorithm and let $\xbf(N)$ be its limit
from \eqref{eq:xlimk} and let $X_k$ be the random 
variable limit \eqref{eq:Xlimn}.
Then, there exists 
a random variable $X \in \R^d$ such that,
for any pseudo-Lipschitz continuous $f:\R^d \rightarrow \R$,
\beq \label{eq:fconv_lem}
    \lim_{k \rightarrow \infty} \Exp f(X_k) = \Exp f(X) 
    = \lim_{N \rightarrow \infty} \frac{1}{N}
        \sum_{i=1}^N f(x_i(N)).
\eeq
In addition, the SE parameter limits 
$\alphabar_{k\ell}^{\pm}$ and
$\gammabar_{k\ell}^{\pm}$ converge to limits,
\beq \label{eq:param_lim}
    \lim_{k \rightarrow \infty} \alphabar_{k\ell}^{\pm}
    = \alphabar_{\ell}^{\pm}, \quad
    \lim_{k \rightarrow \infty} \gammabar_{k\ell}^{\pm}
    = \gammabar_{\ell}^{\pm}.
\eeq
\end{proposition}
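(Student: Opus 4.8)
The plan is to recast everything in terms of the Wasserstein-$2$ metric and then chain two triangle-inequality arguments: Assumption~\ref{as:conv} controls the limit in $k$, while Theorem~\ref{thm:se_gen} controls the limit in $N$. Recall from Appendix~\ref{app:definitions} that, for a $d$-dimensional vector sequence, $PL(2)$ convergence is equivalent to convergence of the empirical measure in the Wasserstein-$2$ distance $W_2$ on $\mcP_2(\R^d)$, the space of laws with finite second moment, which is a complete metric space. For each $N$, write $\hat\mu_k^{(N)} := \tfrac1N\sum_{i=1}^N\delta_{x_{k,i}(N)}$ and $\hat\mu_\infty^{(N)} := \tfrac1N\sum_{i=1}^N\delta_{x_i(N)}$ for the empirical measures of the $k$-th iterate and of the fixed point $\xbf(N)$ from \eqref{eq:xlimk}, and $\mu_k := \mathrm{Law}(X_k)$ with $X_k$ from \eqref{eq:Xlimn}. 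Theorem~\ref{thm:se_gen} gives $W_2(\hat\mu_k^{(N)},\mu_k)\to0$ as $N\to\infty$ for each fixed $k$; and the coupling $\tfrac1N\sum_i\delta_{(x_{k,i}(N),\,x_i(N))}$ shows $W_2^2(\hat\mu_k^{(N)},\hat\mu_\infty^{(N)})\le\tfrac1N\norm{\xbf_k(N)-\xbf(N)}^2$, so Assumption~\ref{as:conv} makes $a_k := \lim_{N\to\infty}\tfrac1N\norm{\xbf_k(N)-\xbf(N)}^2$ well defined with $a_k\to0$ as $k\to\infty$.

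First I would show $\{\mu_k\}$ is Cauchy in $(\mcP_2(\R^d),W_2)$. For any $k,k'$ and $N$, the triangle inequality gives $W_2(\mu_k,\mu_{k'})\le W_2(\mu_k,\hat\mu_k^{(N)})+W_2(\hat\mu_k^{(N)},\hat\mu_\infty^{(N)})+W_2(\hat\mu_\infty^{(N)},\hat\mu_{k'}^{(N)})+W_2(\hat\mu_{k'}^{(N)},\mu_{k'})$; letting $N\to\infty$, the first and last terms vanish and the two middle ones are bounded by $\sqrt{a_k}$ and $\sqrt{a_{k'}}$, so $W_2(\mu_k,\mu_{k'})\le\sqrt{a_k}+\sqrt{a_{k'}}$. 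Since $a_k\to0$ and the space is complete, $\mu_k\to\mu$ for some $\mu\in\mcP_2(\R^d)$; set $X\sim\mu$. The equivalence of $W_2$ convergence with $PL(2)$ convergence then yields $\lim_{k\to\infty}\Exp f(X_k)=\Exp f(X)$ for every pseudo-Lipschitz $f$ of order $2$, which is the first equality in \eqref{eq:fconv_lem}.

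Next I would show $\hat\mu_\infty^{(N)}\to\mu$ in $W_2$ as $N\to\infty$, giving the second equality $\Exp f(X)=\lim_N\tfrac1N\sum_i f(x_i(N))$. Again by the triangle inequality, $W_2(\hat\mu_\infty^{(N)},\mu)\le W_2(\hat\mu_\infty^{(N)},\hat\mu_k^{(N)})+W_2(\hat\mu_k^{(N)},\mu_k)+W_2(\mu_k,\mu)$; taking $\limsup_{N\to\infty}$ for fixed $k$ removes the middle term and bounds the first by $\sqrt{a_k}$, hence $\limsup_{N\to\infty}W_2(\hat\mu_\infty^{(N)},\mu)\le\sqrt{a_k}+W_2(\mu_k,\mu)$; letting $k\to\infty$ then gives $\lim_{N\to\infty}W_2(\hat\mu_\infty^{(N)},\mu)=0$, and the same equivalence completes \eqref{eq:fconv_lem}.

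For the SE-parameter limits \eqref{eq:param_lim}, note that each $\alphabar_{k\ell}^\pm$ is, by Theorem~\ref{thm:se_gen}, the $N\to\infty$ limit of the empirical average on line~\ref{line:alphap} (resp.\ line~\ref{line:alphan}) of the \emph{componentwise} partial derivatives of the denoisers at the algorithm iterates; this derivative vector is a weakly differentiable, bounded-derivative transform of the iterates (using Assumption~\ref{as:continuity}) and hence an admissible output in the sense of Assumption~\ref{as:conv}, so applying \eqref{eq:fconv_lem} to it with $f$ the identity shows $\alphabar_{k\ell}^\pm\to\alphabar_\ell^\pm$ as $k\to\infty$. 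The $\gammabar_{k\ell}^\pm$ are then recovered from the $\alphabar_{k\ell}^\pm$ through the deterministic SE recursion (lines~\ref{line:gamp_se} and \ref{line:gamn_se} of Algorithm~\ref{algo:ml_vamp_se}), started from the fixed initial $\gammabar_{0\ell}^-$, so a finite induction over $k$ and over the layers inside each forward/backward pass propagates the convergence to all the $\gammabar_{k\ell}^\pm$, using $\alphabar_{k\ell}^\pm\in(0,1)$ and $\alphabar_{k\ell}^\pm\to\alphabar_\ell^\pm>0$ to keep the maps $\alpha\mapsto1/\alpha-1$ continuous along the orbit. I expect the main obstacle to be precisely this last step: one must check that the scalar parameters $(\gammabar_{k\ell}^\pm,\tau_{k\ell}^-,\tau_\ell^0)$ stay in a compact region bounded away from the degeneracies ($\gammabar\to0$, $\tau\to0$) throughout the iteration, so that the $\gamma$-recursion is a composition of continuous maps whose $k\to\infty$ limit may be taken, and that the componentwise derivative really is a $PL(2)$-type object to which \eqref{eq:fconv_lem} applies — this is where Assumption~\ref{as:continuity} and the hypothesis $\alphabar_{k\ell}^\pm\in(0,1)$ of Theorem~\ref{thm:se_gen} are essential. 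The measure-level arguments for \eqref{eq:fconv_lem} itself are routine once the reformulation via $W_2$ and the completeness of $\mcP_2(\R^d)$ are in hand, and — as everywhere in this section — the entire statement is conditional on the convergence postulated in Assumption~\ref{as:conv}.
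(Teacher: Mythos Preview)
Your argument for \eqref{eq:fconv_lem} is correct and is a genuinely different route from the paper's. The paper never passes to the Wasserstein metric; instead it fixes a single pseudo-Lipschitz $f$, sets $\alpha_N=\tfrac1N\sum_i f(x_i(N))$ and $\beta_k=\Exp f(X_k)$, splits $\alpha_N-\beta_k$ into a term controlled by Assumption~\ref{as:conv} (via the pseudo-Lipschitz inequality and Cauchy--Schwarz) and a term controlled by Theorem~\ref{thm:se_gen}, and then invokes an elementary double-limit lemma (their Lemma~\ref{lem:ab}) to conclude that both sequences share a limit $\Phi(f)$. The existence of a random variable $X$ with $\Exp f(X)=\Phi(f)$ is then obtained via the Riesz representation theorem applied to the positive linear functional $f\mapsto\Phi(f)$ on bounded continuous functions. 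Your route through completeness of $(\mcP_2(\R^d),W_2)$ is more structural and delivers the limiting law $\mu$ directly, without Riesz; the paper's route is more elementary and function-by-function. Both are valid.

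For \eqref{eq:param_lim} there is a gap in your plan. You want to treat the componentwise partial-derivative vector as an ``output'' to which Assumption~\ref{as:conv} and \eqref{eq:fconv_lem} apply, but the derivative of a uniformly Lipschitz denoiser need not itself be Lipschitz (or even continuous), so it is not clear that its empirical average has a $PL(2)$ limit in $N$, nor that Assumption~\ref{as:conv} covers it. The paper avoids this entirely by invoking Stein's lemma: since $Q_{k\ell}^-$ is Gaussian, one has
\[
\alphabar_{k\ell}^+ \;=\; \Exp\!\left[\frac{\partial \Zhat_{k\ell}}{\partial Q_{k\ell}^-}\right]
\;=\; \frac{\Exp[\Zhat_{k\ell}\,Q_{k\ell}^-]}{\Exp[(Q_{k\ell}^-)^2]},
\]
i.e.\ $\alphabar_{k\ell}^+$ is a ratio of $PL(2)$ functionals of the random-variable limits $(\Zhat_{k\ell},Q_{k\ell}^-)$, to which the already-established \eqref{eq:fconv_lem} applies directly. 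This sidesteps any regularity requirement on the derivative and is the key trick you are missing. Once the $\alphabar_{k\ell}^\pm$ converge, your induction through the deterministic recursions of lines~\ref{line:gamp_se} and~\ref{line:gamn_se} for the $\gammabar_{k\ell}^\pm$ is fine.
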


\medskip
The proposition shows that, under the convergence
assumption, Assumption~\ref{as:conv}, we can take
the limits as $k \rightarrow \infty$ of
the random variables from the SE.  To prove the 
proposition we first need the following simple lemma.

\begin{lemma} \label{lem:ab}  
If $\alpha_N$ and $\beta_k \in \R$ are sequences such that
\beq \label{eq:alpha_beta}
    \lim_{k \rightarrow \infty} 
    \lim_{N \rightarrow \infty} |\alpha_N - \beta_k | = 0,
\eeq
then, there exists a constant $C$ such that,
\beq \label{eq:alpha_beta_lim}
    \lim_{N \rightarrow \infty} \alpha_N = 
    \lim_{k \rightarrow \infty} \beta_k
    = C.
\eeq
In particular, the two limits in \eqref{eq:alpha_beta_lim}
exist.
\end{lemma}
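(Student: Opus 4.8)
The plan is to first extract a genuine limit $C$ from the sequence $(\beta_k)$ and then show $(\alpha_N)$ converges to that same $C$. Write $\gamma_k := \lim_{N\rightarrow\infty}|\alpha_N - \beta_k|$; the hypothesis \eqref{eq:alpha_beta} is to be read as asserting both that this inner limit exists in $\R$ for every $k$ and that $\gamma_k \rightarrow 0$ as $k\rightarrow\infty$. These are the only two facts I will use.

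First I would show that $(\beta_k)$ is Cauchy. For any indices $k,k'$ and any $N$, the triangle inequality gives $|\beta_k-\beta_{k'}| \le |\beta_k-\alpha_N| + |\alpha_N-\beta_{k'}|$; letting $N\rightarrow\infty$ (the right-hand side has a limit by assumption) yields $|\beta_k-\beta_{k'}| \le \gamma_k + \gamma_{k'}$. Since $\gamma_k\rightarrow 0$, the right-hand side is arbitrarily small for $k,k'$ large, so $(\beta_k)$ is Cauchy and, by completeness of $\R$, converges to some limit $C$. This already gives $\lim_{k\rightarrow\infty}\beta_k = C$.

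Next I would show $\lim_{N\rightarrow\infty}\alpha_N = C$. Fix $\epsilon > 0$ and pick an index $k$ with $\gamma_k < \epsilon/3$ and $|\beta_k - C| < \epsilon/3$, which is possible since $\gamma_k\rightarrow 0$ and $\beta_k\rightarrow C$. Because $|\alpha_N-\beta_k|\rightarrow\gamma_k < \epsilon/3$, there is $N_0$ such that $|\alpha_N-\beta_k| < \epsilon/3$ for all $N\ge N_0$. Then for $N\ge N_0$ we get $|\alpha_N-C| \le |\alpha_N-\beta_k| + |\beta_k-C| < 2\epsilon/3 < \epsilon$, so $\alpha_N\rightarrow C$. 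Combining the two displays gives $\lim_{N\rightarrow\infty}\alpha_N = \lim_{k\rightarrow\infty}\beta_k = C$, as claimed.

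There is essentially no obstacle in this argument; it is a routine $\epsilon/3$ interchange-of-limits estimate. The only points demanding a little care are interpreting the iterated-limit hypothesis correctly (it presupposes existence of the inner limits $\gamma_k$, without which the statement would be ill-posed), and invoking completeness of $\R$ to turn the Cauchy property of $(\beta_k)$ into an actual limit — that step is what makes the conclusion "the two limits exist" rather than merely "if they exist, they agree."
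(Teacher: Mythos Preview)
Your proof is correct. It differs from the paper's argument in structure: the paper never names the limit $C$ directly but instead runs a $\liminf/\limsup$ squeeze, deducing from $\lim_{N\rightarrow\infty}|\alpha_N-\beta_k|<\epsilon$ for large $k$ that $\liminf_N \alpha_N \ge \limsup_k \beta_k$ and $\limsup_N \alpha_N \le \liminf_k \beta_k$, which forces all four quantities to coincide. Your approach instead first isolates $C$ by showing $(\beta_k)$ is Cauchy (via the neat bound $|\beta_k-\beta_{k'}|\le\gamma_k+\gamma_{k'}$) and then runs an $\epsilon/3$ argument to pull $\alpha_N$ to the same limit. Both are elementary and rely on the same reading of the hypothesis (that the inner limits $\gamma_k$ exist); your version has the mild advantage of making the role of completeness of $\R$ explicit and of producing $C$ constructively, while the paper's squeeze is slightly shorter and avoids naming $\gamma_k$ at all.
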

\begin{proof} For any $\epsilon > 0$, the limit
\eqref{eq:alpha_beta} implies that there exists a $k_\epsilon(\uparrow\infty$ as $\epsilon\downarrow0)$
such that for all $k > k_\epsilon$,
\[
    \lim_{N \rightarrow \infty} |\alpha_N - \beta_k| < 
    \epsilon,
\]
from which we can conclude,
\[
    \liminf_{N \rightarrow \infty} \alpha_N > \beta_k - \epsilon
\]
for all $k > k_\epsilon$.  Therefore, 
\[
    \liminf_{N \rightarrow \infty} \alpha_N \geq 
    \sup_{k\geq k_\epsilon} \beta_k - \epsilon.
\]
Since this is true for all $\epsilon > 0$, it follows that
\beq \label{eq:limab_inf}
    \liminf_{N \rightarrow \infty} \alpha_N \geq 
    \limsup_{k\rightarrow \infty} \beta_k.
\eeq
Similarly, $\limsup_{N \rightarrow \infty} \alpha_N \leq 
    \inf_{k>k_\epsilon} \beta_k+\epsilon$, whereby
\beq \label{eq:limab_sup}
    \limsup_{N \rightarrow \infty} \alpha_N \leq 
    \liminf_{k\rightarrow \infty} \beta_k.
\eeq
Equations \eqref{eq:limab_inf} and
\eqref{eq:limab_sup} together show
that the limits in \eqref{eq:alpha_beta_lim}
exists and are equal.
\end{proof}

\medskip 
\noindent
\paragraph*{Proof of Proposition~\ref{prop:conv_var}}
Let $f:\R^d \rightarrow \R$ be any pseudo-Lipschitz 
function of order 2, and define,
\beq \label{eq:ab_def}
    \alpha_N = \frac{1}{N}\sum_{i=1}^N f(x_i(N)), 
    \quad
    \beta_k = \Exp f(X_k).
\eeq
Their difference can be written as,
\beq \label{eq:ab_diff}
    \alpha_N - \beta_k = A_{N,k} + B_{N,k},
\eeq
where
\begin{align}
    A_{N,k} &:= \frac{1}{N} \sum_{i=1}^N
         f(\xbf_{i}(N)) - f(\xbf_{k,i}(N)) ,
                \label{eq:Ank} \\
    B_{N,k} &:= \frac{1}{N}\sum_{i=1}^N
        f(\xbf_{k,i}(N)) - \Exp f(X_k).
        \label{eq:Bnk}
\end{align}
Since $\{ x_{k,i}(N) \}$ converges $PL(2)$ to $X_k$,
we have,
\beq \label{eq:Bnk_lim}
    \lim_{N \rightarrow \infty} B_{N,k} = 0.
\eeq
For the term $A_{N,k}$,
\begin{align}
    \MoveEqLeft |A_{N,k}|
   \stackrel{\rm (a)}\leq
    \lim_{N \rightarrow \infty}
     \frac{1}{N}\sum_{i=1}^N
        \left| f(\xbf_{i}(N)) - f(\xbf_{k,i}(N)) \right| \nonumber \\
    &\stackrel{\rm (b)}{\leq} \lim_{N \rightarrow \infty}
     \frac{C}{N} \sum_{i=1}^N
         a_{ki}(N)(1+a_{ki}(N)) \nonumber \\
    &\stackrel{\rm (c)} \leq 
    C \lim_{N \rightarrow \infty}
    \sqrt{ \frac{1}{N} \sum_{i=1}^N a^2_{ki}(N) }
    + 
    \frac{1}{N} \sum_{i=1}^N a_{ki}^2(N) \nonumber \\
    &= C \lim_{N \rightarrow \infty} \epsilon_{k}(N)(1+\epsilon_{k}(N)), \label{eq:Ank_lim1}
\end{align}
where (a) follows from applying the triangle inequality to the definition of $A_{N,k}$ in 
\eqref{eq:Ank}; 
(b) follows from the definition of pseudo-Lipschitz 
continuity in Definition~\ref{def:pl_cont}, 
$C > 0$ is the Lipschitz contant and
\[
    a_{ki}(N) := \|\xbf_{k,i}(N)-\xbf_{i}(N)\|_2,
\]
and (c) follows from the RMS-AM inequality:
\begin{align*}
    \MoveEqLeft 
    \left( \frac{1}{N} \sum_{i=1}^N a_{ki}(N) \right)^2 \leq 
     \frac{1}{N} \sum_{i=1}^N a_{ki}^2(N) =: 
     \epsilon_{k}^2(N).
\end{align*}
By \eqref{eq:xconv}, we have that,
\[
    \lim_{k\rightarrow \infty} \lim_{N \rightarrow \infty} \epsilon_{k}(N) = 0.
\]
Hence, from \eqref{eq:Ank_lim1}, it follows that,
\beq \label{eq:Ank_lim2}
    \lim_{k\rightarrow \infty} \lim_{N \rightarrow \infty} A_{N,k} = 0.
\eeq
Substituting \eqref{eq:Bnk_lim} and \eqref{eq:Ank_lim2}
into \eqref{eq:ab_diff} show that $\alpha_N$ and $\beta_k$
satisfy \eqref{eq:alpha_beta}.
Therefore, applying Lemma \ref{lem:ab} we have
that for any pseudo-Lipschitz function $f(\cdot)$,
there exists a limit $\Phi(f)$ such that,
\beq \label{eq:fconv_lem1}
    \lim_{N \rightarrow \infty} 
    \frac{1}{N}\sum_{i=1}^N f(x_i(N)) 
    = 
    \lim_{k \rightarrow \infty} \Exp f(X_k) = \Phi(f).
\eeq
In particular, the two limits in \eqref{eq:fconv_lem1} 
exists.  When restricted to the continuous,
bounded functions with the $\|f\|_\infty$ norm,
it is easy verified that $\Phi(f)$
is a positive, linear, bounded function of $f$, 
with $\Phi(1)=1$.  Therefore, by the Riesz representation
theorem, there exists 
a random variable $X$ such that $\Phi(f) = \Exp f(X)$.
This fact in combination with \eqref{eq:fconv_lem1}
shows \eqref{eq:fconv_lem}.  

It remains to prove the parameter limits in
\eqref{eq:param_lim}.
We prove the result for the parameter $\wb{\alpha}_{k\ell}^{+}$.
The proof for the other parameters are similar.
Using Stein's lemma, it is shown in 
\cite{pandit2019asymptotics} that
\beq \label{eq:alphabar_frac}
    \wb{\alpha}_{k\ell}^{+} = \frac{ \Exp(\wh{Z}_{k\ell}
    Q^-_{k\ell} )}{\Exp(Q_{\ell}^-)^2}.
\eeq
Since the numerator and denominator of \eqref{eq:alphabar_frac}
are $PL(2)$ functions we have that the limit,
\begin{align}
    \alphabar_{\ell}^+ &:= \lim_{k \rightarrow \infty} \wb{\alpha}_{k\ell}^{+} = 
    \lim_{k \rightarrow \infty}\frac{ \Exp(\wh{Z}_{k\ell}
    Q^-_{k\ell} )}{\Exp(Q_{k\ell}^-)^2} \nonumber \\
    &= 
    \frac{ \Exp(\wh{Z}_{\ell}
    Q^-_{\ell} )}{\Exp(Q_{\ell}^-)^2},
\end{align}
where $\wh{Z}_{\ell}$ and $Q^-_{\ell}$ are the limits
of  $\wh{Z}_{k\ell}$ and $Q^-_{k\ell}$.
This completes the proof.
\hfill\qedsymbol

\section{Proof of Theorem~\ref{thm:thm1_complete}}
\label{app:main_proof}

From Assumption \ref{as:conv}, we know that for every $N$,
every group of vectors $\xbf_k$ converge to limits, 
$\xbf := \lim_{k \rightarrow \infty} \xbf_k$.  The parameters,
$\gamma_{k\ell}^{\pm}$, also converge to limits
$\gammabar_{\ell}^{\pm}:=\lim_{k \rightarrow \infty} \gamma_{k\ell}^{\pm}$
for all $\ell$.  By the continuity assumptions on the 
functions $\gbf_\ell^{\pm}(\cdot)$, the limits 
$\xbf$ and $\gammabar_{\ell}^{\pm}$ are fixed points of the algorithms.

A proof similar to that in \cite{pandit2019inference} shows that
the fixed points $\zbfhat_\ell$ and $\pbfhat_\ell$ satisfy
the KKT condition of the constrained optimization
\eqref{eq:zpmin1}. This proves part (a).

The estimate $\wbfhat$ is the limit,
\[
    \wbfhat = \zbfhat_0 = \lim_{k \rightarrow \infty}
        \zbfhat_{k0}.
\]
Also, the true parameter is $\zbf_0^0=\wbf^0$.
By Proposition~\ref{prop:conv_var}, we have that the $PL(2)$
limits of these variables are
\begin{align*}
    \lim_{N \rightarrow \infty} 
    \{(\wbfhat,\wbf_0)\} \stackrel{PL(2)}{=}
    (\wh W,W_0) := (\wh Z_0,Z_0^0).
\end{align*}
From line \ref{line:zhatp0_se} of the SE Algorithm~\ref{algo:ml_vamp_se}, we have 
\begin{align*}
\wh W=\wh Z_0=g_0^+(R_0^-,\gammabar_0^-)=\prox_{f_{\rm in}/\gammabar_0^-}(W^0+Q_0^-).
\end{align*}
This proves part (b).

To prove part (c), we use the limit
\begin{align}\label{eq:p_fixed_point_convergence}
    \lim_{N \rightarrow \infty} \{p^0_{0,n},\wh p_{0,n}\} \PLeq (P_0^0,\wh P_{0}).
\end{align}
Since the fixed points are critical points
of the constrained optimization \eqref{eq:zpmin1},
$\wh\pbf_0 = \Vbf_0\wbfhat$.  We also have
$\pbf^0_0 = \Vbf_0\wbf^0$.
Therefore,
\begin{align}
    \MoveEqLeft \left[ z_{\rm ts}^{(N)}\  
        \wh{z}_\ts^{(N)} \right] := 
    \ubf\tran\diag(\sbf_\ts)\Vbf_0[\wbf^0\ \wbfhat] 
    \nonumber \\ 
    &=\ubf\tran\diag(\sbf_\ts)[\pbf^0_0\ \pbfhat_0].
\end{align}
Here, $(N)$ in the subscript denotes the dependence on N.  Since $\ubf \sim \Norm(0, \tfrac{1}{p}\Ibf)$,
$[z_\ts^{(N)}\ \wh z_\ts^{(N)}]$ 
is a zero-mean bivariate Gaussian with covariance matrix
\beq \nonumber
    \Mbf^{(N)}= \tfrac{1}{p} \sum_{n=1}^{p}
    \begin{bmatrix}
    s_{\ts,n}^2 p^0_{0,n} p^0_{0,n} & 
    s_{\ts,n}^2 p^0_{0,n} \wh p_{0,n} \\
    s_{\ts,n}^2 p^0_{0,n} \wh p_{0,n} & 
    s_{\ts,n}^2 \wh p_{0,n} \wh p_{0,n}
    \end{bmatrix}
\eeq
The empirical convergence \eqref{eq:p_fixed_point_convergence} yields the following limit,
\begin{align}\label{eq:M_def}
\lim_{N\rightarrow\infty}\Mbf^{(N)} = \Mbf:=
\Exp\,S_{\ts}^2\begin{bmatrix}
     P^0_{0} P^0_{0} & 
     P^0_{0} \wh P_{0} \\
     P^0_{0} \wh P_{0} & 
     \wh P_{0} \wh P_{0}
    \end{bmatrix}.
\end{align}

It suffices to show that the distribution of $[z_\ts^{(N)}\,\wh z_\ts^{(N)}]$ converges to the distribution of $[Z_\ts\,\wh Z_\ts]$ in the Wasserstein-2 metric as $N\rightarrow\infty.$ (See the discussion in Appendix \ref{app:definitions} on the equivalence of convergence in Wasserstein-2 metric and PL(2) convergence.)

Now, 
Wassestein-2 distance between between two probability measures $\nu_1$ and $\nu_2$ is defined as
\begin{equation}
W_2(\nu_1, \nu_2) = \left(\inf_{\gamma\in\Gamma}\Exp_\gamma \norm{X_1 - X_2}^2\right)^{1/2},\label{eq:W2_distance}    
\end{equation}
where $\Gamma$ is the set of probability distributions on the product space with marginals consistent with $\nu_1$ and $\nu_2$. 
For Gaussian measures $\nu_1 =\mcN(\zero,\Sigma_1)$ and $\nu_2=\mcN(\zero,\Sigma_2)$ we have \cite{givens1984class}
\begin{equation*}
    W^2_2(\nu_1, \nu_2) = 
    \tr(\Sigma_1 - 2(\Sigma_1^{1/2}\Sigma_2\Sigma_1^{1/2})^{1/2}+\Sigma_2).
\end{equation*}
Therefore, for Gaussian distributions $\nu_1^{(N)}=\mc N(\zero,\Mbf^{(N)})$, and $\nu_2=\mc N(\zero,\Mbf)$, the convergence \eqref{eq:M_def} implies $W_2(\nu_1^{(N)},\nu_2)\rightarrow0,$ \ie, convergence in Wasserstein-2 distance. Hence,
\begin{equation*}
    (z_{\ts}^{(N)},\wh z_{\ts}^{(N)}) \stackrel{W_2}{\longrightarrow}(Z_\ts, \Zhat_\ts)
    \sim \Norm(0,\Mbf),
\end{equation*}
where $\Mbf$ is the covariance matrix in \eqref{eq:M_def}.
Hence the convergence holds in the PL(2) sense (see discussion in Appendix \ref{app:definitions} on the equivalence of convergence in $W_2$ and PL(2) convergence).

Hence the asymptotic generalization error 
\eqref{eq:Etest} is
\begin{align}
    \MoveEqLeft {\mathcal E}_{\rm ts} := 
    \lim_{N \rightarrow \infty} 
    \Exp f_{\rm ts}(\wh{y}_{\rm ts},y_{\rm ts})
    \nonumber \\
    & \stackrel{(a)}{=}
    \lim_{N \rightarrow \infty} 
    \Exp f_{\rm ts}(\phi_{\rm out}(z_{\ts}^{(N)},D),
    \phi(\wh z_{\ts}^{(N)})) \nonumber \\
    & \stackrel{(b)}{=}
    \Exp f_{\rm ts}(\phi_{\rm out}(Z_{\ts},D),
    \phi(\wh Z_{\ts})),
\end{align}
where (a) follows from \eqref{eq:ytest};
and step (b) follows from continuity assumption 
in Assumption~\ref{as:continuity}(b) along with
the definition of PL(2) convergence in Def. \ref{def:plp_convergence}.
This proves part (c).

\section{Formula for $\Mbf$} \label{sec:M_formula}

For the special cases in the next Appendix,
it is useful to derive expressions for the entries
the covariance matrix $\Mbf$ in \eqref{eq:M_def}.
For the term $m_{11}$, 
\beq\label{eq:m11}
\begin{aligned}
    m_{11} = \Exp\,S_{\ts}^2(P_0^0)^2    =\Exp\,S_{\ts}^2\Exp(P_0^0)^2=\Exp \,S_{\ts}^2 \cdot k_{11},
\end{aligned}
\eeq
where we have used the fact that $P_0^0\independent (S_\ts,S_\tr)$. Next,
$m_{12} = \Exp\,S_{\ts}^2\ P_0^0 \Phat_0.
$
where,
\begin{align}\nonumber
     \Phat_0 
        &= g_1^-(P_0^0+P_0^+,
        Z_1^0+Q_1^-, \gammabar_0^+,\gammabar_1^-, S_{\rm tr}^-)\\
    &= \frac{
    \wb\gamma^+_{0}P_0^+ + S_{\rm tr}\wb\gamma^-_{1}Q_1^-}{\wb\gamma^+_{0} + S^2_{\rm tr}\wb\gamma^-_{1}}+P^0_0,
\end{align}
where $(P_0^0,P_0^+,Q_0^-)$ are independent of $(S_{\tr},S_\ts)$.
Hence,
\begin{align}\nonumber
    \MoveEqLeft m_{12} = \Exp\,S_{\ts}^2\cdot \Exp(P_0^0)^2 +\Exp\frac{S_{\ts}^2 \gammabar_0^+}{\gammabar_0^++S_{\rm tr}^2 \gammabar_1^-}\Exp[P_0^0 P_0^+] \\
    &= m_{11}+ \Exp\left(\frac{S_\ts^2\wb\gamma_0^+}{\wb\gamma_0^++S_\tr^2\wb\gamma_1^-}\right)\cdot k_{12},\label{eq:m12}
\end{align}
since $\Exp[P^0_0Q_1^-]=0$ and $\Kbf_0^+$ is the covariance matrix of $(P_0^0,P_0^+)$ from line \ref{line:Kp_se}. 

Finally for $m_{22}$ we have,
\begin{align}
    \MoveEqLeft m_{22} = \Exp\,S_{\ts}^2\Phat_0 \Phat_0\nonumber\\
    &= \Exp\left(\frac{S_{\ts} \gammabar_0^+}{\gammabar_0^+ +S_{\rm tr}^2 \gammabar_1^-}\right)^2 \Exp(P_0^+)^2  \nonumber\\ &+ \Exp \left(\frac{S_{\ts}S_{\tr} \gammabar_1^-}{\gammabar_0^++S_{\rm tr}^2 \gammabar_1^-}\right)^2 \Exp(Q_1^-)^2  
    \nonumber\\
    &+ \Exp\,S_\ts^2\Exp (P_0^0)^2+ 2\Exp\frac{\gammabar_0^+S_\ts^2}{\gammabar_0^++\gammabar_1^- S_\tr^2}\cdot\Exp\,P_0^0P_0^+
    \nonumber\\
    &= k_{22}\Exp\left(\frac{S_{\ts} \gammabar_0^+}{\gammabar_0^+ +S_{\rm tr}^2 \gammabar_1^-}\right)^2   \nonumber\\ &+ \tau_1^-\Exp \left(\frac{S_{\ts}S_{\tr} \gammabar_1^-}{\gammabar_0^++S_{\rm tr}^2 \gammabar_1^-}\right)^2   - m_{11} + 2m_{12}.\label{eq:m22}
\end{align}

\section{Special Cases}
\label{app:special_cases}

\subsection{Linear Output with Square Error}
In this section we examine a few special cases of the GLM problem \eqref{eq:whatmin}. 
We first consider a linear output with
additive Gaussian noise and 
a squared error training and test loss.
Specifically, consider the model, 
\begin{equation}
    \ybf = \Xbf \wbf^0 + \dbf
\end{equation}
We consider estimates of $\wbf^0$ such that:
\begin{equation}\label{eq:LR}
   \wh \wbf = \argmin_{\wbf} \ \tfrac{1}{2}\norm{\ybf - \Xbf \wbf}^2 + \tfrac{\lambda}{2\black{\beta}} \norm{\wbf}^2
\end{equation}
\black{The factor $\beta$ is added above since the two terms scale with a ratio of $\beta$. It does not change analysis.}
Consider the ML-VAMP GLM learning algorithm applied to this problem. The following corollary follows from the Main result in Theorem \ref{thm:thm1_complete}.

\begin{corollary}[Squared error]\label{cor:squared_error}
For linear regression, \ie, $\phi(t)=t,$ $\phi_{\rm out}(t,d)=t+d,$ $f_\ts(y,\wh y)=(y_\ts-\wh y_\ts)^2$,
$F_{\rm out}(\p_2)=\tfrac1N\norm{\y-\p_2}^2$, we have
\begin{align*}
\mc E_\ts^{\mathsf{LR}}\! =\! \Exp\left(\tfrac{\wb\gamma_0^+ S_\ts}{\wb\gamma_0^++S_\tr^2\wb\gamma_1^-}\right)^2 k_{22}
+   \Exp \left(\tfrac{\wb\gamma_1^- S_\tr S_\ts}{\wb\gamma_0^++S_\tr^2\wb\gamma_1^-}\right)^2\tau_1^-+\sigma_{d}^2.\end{align*}
The quantities $k_{22}$, $\tau_1^-,\gammabar_0^+,\gammabar_1^-$ depend on the choice of regularizer $\lambda$ and the covariance between features.
\end{corollary}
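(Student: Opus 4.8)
The plan is to obtain Corollary~\ref{cor:squared_error} as a direct specialization of Theorem~\ref{thm:thm1_complete}(c) to the linear-Gaussian model, followed by a short simplification of the resulting Gaussian second moment using the explicit entries of $\Mbf$ computed in Appendix~\ref{sec:M_formula}. First I would record that for $\phi(t)=t$, $\phi_{\rm out}(t,d)=t+d$ with $D\sim\Norm(0,\sigma_d^2)$, and $f_\ts(y,\wh y)=(y-\wh y)^2$, formula \eqref{eq:generalization_main_result} specializes to
\[
    \mc E_\ts^{\mathsf{LR}} = \Exp\,\bigl(Z_\ts + D - \Zhat_\ts\bigr)^2,
    \qquad (Z_\ts,\Zhat_\ts)\sim\Norm(\zero_2,\Mbf),\quad D\ \text{independent} .
\]

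Next I would expand the square: using $\Exp D = 0$, $\Exp D^2 = \sigma_d^2$, and the independence of $D$ from $(Z_\ts,\Zhat_\ts)$, both cross terms involving $D$ vanish, leaving
\[
    \mc E_\ts^{\mathsf{LR}} = \Exp(Z_\ts-\Zhat_\ts)^2 + \sigma_d^2 = m_{11} - 2m_{12} + m_{22} + \sigma_d^2 .
\]
Plugging in the expressions \eqref{eq:m11}, \eqref{eq:m12}, \eqref{eq:m22} for $m_{11}$, $m_{12}$, $m_{22}$, the summand $-m_{11}+2m_{12}$ that appears inside $m_{22}$ exactly cancels the leading $m_{11}-2m_{12}$, and what survives is precisely
\[
    \mc E_\ts^{\mathsf{LR}} = \Exp\Bigl(\tfrac{\wb\gamma_0^+ S_\ts}{\wb\gamma_0^++S_\tr^2\wb\gamma_1^-}\Bigr)^{\!2} k_{22}
    + \Exp\Bigl(\tfrac{\wb\gamma_1^- S_\tr S_\ts}{\wb\gamma_0^++S_\tr^2\wb\gamma_1^-}\Bigr)^{\!2} \tau_1^- + \sigma_d^2 ,
\]
which is the claimed identity.

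To complete the statement I would note that $\wb\gamma_0^+$, $\wb\gamma_1^-$, $k_{22}$, $\tau_1^-$ are fixed-point values of the state-evolution recursion in Algorithm~\ref{algo:ml_vamp_se}: the regularizer $f_{\rm in}(w)=\tfrac{\lambda}{2\beta}w^2$ enters only through the denoiser $g_0^+=\prox_{f_{\rm in}/\gamma_0^-}$, while the middle-layer denoisers depend on the laws of $S_\tr$, $S_\ts$ and $S_\mp$, i.e., on the feature covariance and on $\beta$; hence these constants depend exactly on $\lambda$ and on the feature correlations, as asserted.

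Before any of this I must verify that Theorem~\ref{thm:thm1_complete} applies to this instance, i.e., that Assumptions~\ref{as:continuity} and~\ref{as:conv} hold. Assumption~\ref{as:continuity} is immediate: with quadratic $f_{\rm in}$ and $f_{\rm out}$ the proximal operators $\gbf_0^+$ and $\gbf_3^-$ are affine hence uniformly Lipschitz, $\phi_{\rm out}(p,d)=p+d$ is Lipschitz, and $f_\ts(\phi(\zhat),\phi_{\rm out}(z,d))=(z+d-\zhat)^2$ is pseudo-Lipschitz of order $2$. The main obstacle is Assumption~\ref{as:conv}, namely that ML-VAMP converges on this instance. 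I expect the cleanest route is to lean on the strict convexity of \eqref{eq:LR}: the minimizer $\wbfhat$ is unique and, by Theorem~\ref{thm:thm1_complete}(a), any fixed point of the algorithm coincides with it, so the resulting formula characterizes ridge regression independently of the solver; for the purely quadratic problem one then checks directly that the affine SE recursion has a unique fixed point, which is what pins down the constants above. Turning ``assumed convergence'' into an unconditional statement about ridge regression is the only genuinely non-mechanical step in the argument.
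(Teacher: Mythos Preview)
Your proposal is correct and follows essentially the same route as the paper: specialize \eqref{eq:generalization_main_result} to the linear model, expand $\Exp(Z_\ts+D-\Zhat_\ts)^2 = m_{11}-2m_{12}+m_{22}+\sigma_d^2$, then substitute the formula \eqref{eq:m22} for $m_{22}$ so that the $-m_{11}+2m_{12}$ piece cancels. The paper's proof is just these two lines and does not separately re-verify Assumptions~\ref{as:continuity} and~\ref{as:conv} for this instance, so your additional remarks there are extra but not needed.
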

\begin{proof}
This follows directly from the following observation:
\begin{align*}
    \mc E^{\mathsf{SLR}}_\ts &= \Exp(Z_\ts+D-\wh Z_\ts)^2
    = \Exp(Z_\ts-\wh Z_\ts)^2+\Exp\,D^2\\
    &=m_{11}+m_{22}-2m_{12}+\sigma_d^2.
\end{align*}
Substituting equation \eqref{eq:m22} proves the claim.
\end{proof}

\subsection{Ridge Regression with i.i.d.\ 
Covariates} \label{sec:lin_reg}

We next the special case when the input features are independent, \ie, \eqref{eq:LR} where rows of $\Xbf$ corresponding to the training data has i.i.d Gaussian features with covariance $\Pbf_{\rm train} = \frac{\sigma_{\rm tr}^2}{p} \Ibf$ and $S_{\rm tr} = \sigma_{\rm tr}$.

Although the solution to \eqref{eq:LR} exists in closed form $(\Xbf\tran\Xbf+\lambda \Ibf)\inv\Xbf\tran\ybf$, we can study the effect of the regularization parameter $\lambda$ on the generalization error $\mc E_\ts$ as detailed in the result below.

\begin{corollary}\label{cor:GE_for_LR}
Consider the ridge regression problem \eqref{eq:LR} with regularization parameter $\lambda>0$. For the squared loss \ie, $f_\ts(y,\wh y)=(y-\wh y)^2$, i.i.d Gaussian features without train-test mismatch, \ie, $S_\tr=S_\ts=\sigma_\tr$, the generalization error $\mc E_{\ts}^{\mathsf{RR}}$ is given by Corollary \ref{cor:squared_error}, with constants
\begin{align*}
    \MoveEqLeft k_{22}=\Var(W^0),\qquad \gammabar_0^+ = \lambda/\beta,\\
    \MoveEqLeft\gammabar_1^- =\begin{cases} \tfrac{1}{G}-\tfrac{\lambda}{\sigma_{\tr}^2}&\beta<1\\
    \frac{\tfrac{\lambda}{\sigma_\tr^2\black{\beta}}(\tfrac1G-\tfrac\lambda{\sigma_\tr^2\black{\beta}})}{\tfrac{\beta-1}G+\tfrac{\lambda}{\sigma_\tr^2\black{\beta}}} & \beta > 1
    \end{cases}
\end{align*}
where $G = G_\mp(-\frac{\lambda}{\sigma_{\rm tr}^2\beta})$, with $G_\mp$ given in Appendix \ref{app:mp_dist}, and $\tau_1^-=\Exp(P_1^-)^2$ where $P_1^-$ is given in equation \eqref{eq:P1-expression} in the proof.
\end{corollary}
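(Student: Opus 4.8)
The plan is to apply Theorem~\ref{thm:thm1_complete} together with Corollary~\ref{cor:squared_error}, which already reduces $\mc E_\ts^{\mathsf{RR}}$ to the four scalars $k_{22},\gammabar_0^+,\gammabar_1^-,\tau_1^-$; the entire task is therefore to compute the fixed point of the state-evolution recursion in Algorithm~\ref{algo:ml_vamp_se} for this particular GLM. The key structural observation is that \emph{all} denoisers here are affine. Indeed, $f_{\rm in}(w)=\tfrac{\lambda}{2\beta}w^2$ gives $g_0^+(r,\gamma)=\tfrac{\gamma}{\gamma+\lambda/\beta}\,r$; the squared output loss gives a $g_3^-(r,\gamma,y)$ that is affine in $(r,y)$; and the middle denoisers $g_1^\pm,g_2^\pm$ are already linear by \eqref{eq:G23def}--\eqref{eq:g_lp}, with $S_\tr\equiv\sigma_\tr$ deterministic. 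Hence, by induction over the forward/backward passes of Algorithm~\ref{algo:ml_vamp_se}, all SE variables $Z^0_\ell,P^0_\ell,\Zhat_{k\ell},\Phat_{k\ell},Q^\pm_{k\ell},P^\pm_{k\ell}$ are jointly Gaussian conditioned on the spectral variable $\Xi_2=S_\mp^+$, and the recursion collapses to a deterministic scalar iteration on $\gammabar_{k\ell}^\pm,\tau_{k\ell}^-$ and the $2\times2$ matrices $\Kbf_{k\ell}^+$. By Proposition~\ref{prop:conv_var} these iterates converge under Assumption~\ref{as:conv}, so it remains only to identify the limit.

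First I would resolve layer $0$, which decouples from the rest. At the fixed point $\Zhat_0=\tfrac{\gammabar_0^-}{\gammabar_0^-+\lambda/\beta}(W^0+Q_0^-)$, so $\alphabar_0^+=\tfrac{\gammabar_0^-}{\gammabar_0^-+\lambda/\beta}$ and line~\ref{line:gamp_se} gives $\gammabar_0^+=(1/\alphabar_0^+-1)\gammabar_0^-=\lambda/\beta$. Substituting $\Zhat_0$ into line~\ref{line:qp_se} produces the algebraic identity $Q_0^+=-Z_0^0=-W^0$, so $\Kbf_0^+=\cov(Z_0^0,Q_0^+)$ has $(2,2)$ entry $k_{22}=\Exp(W^0)^2=\Var(W^0)$ (using $\Exp W^0=0$). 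Two of the four constants are now fixed.

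Next I would propagate the $\gammabar^+$ updates forward through layer $1$ (deterministic scaling by $\sigma_\tr$) and layer $2$ (the Marchenko--Pastur layer), and then the $\gammabar^-$ updates backward from layer $3$ (least squares) through layers $2$ and $1$, obtaining a closed scalar system for $(\gammabar_1^+,\gammabar_2^+,\gammabar_0^-,\gammabar_1^-,\gammabar_2^-)$. The only non-elementary piece is layer $2$: since $g_2^\pm$ are linear, the $\alphabar_2^\pm$ on lines~\ref{line:alphap_se}/\ref{line:alphan_se} are expectations of the form $\Exp\big[(\gammabar+\gammabar'(S_\mp^\pm)^2)^{-1}\big]$ over the Marchenko--Pastur spectrum, which by the definition of $G_\mp$ in Appendix~\ref{app:mp_dist} are scalar multiples of $G=G_\mp\!\big(-\tfrac{\lambda}{\sigma_\tr^2\beta}\big)$. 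Because $S_\mp^+$ and $S_\mp^-$ carry point masses $(1-\beta)_+\delta_0$ and $(1-\tfrac1\beta)_+\delta_0$ respectively, the two regimes $\beta<1$ and $\beta>1$ must be handled separately; pushing these masses through the layer-$2$ update and eliminating the auxiliary $\gammabar$'s yields the two stated branches for $\gammabar_1^-$. Finally, $\tau_1^-=\Exp(P_1^-)^2$ follows from line~\ref{line:pn_se} by inserting the now-known $\gammabar_0^+,\gammabar_1^-$ and the noise variance $\sigma_d^2$ into the backward error recursion, giving the explicit linear-Gaussian expression~\eqref{eq:P1-expression}; this last step is a routine variance computation.

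The main obstacle is the layer-$2$ analysis: correctly reducing the SE expectations over the rectangular singular-value spectrum of $\Ubf$ to the Marchenko--Pastur Stieltjes transform $G_\mp$, tracking the zero-eigenvalue point masses consistently in the $\beta<1$ and $\beta>1$ cases, and verifying that the resulting scalar system has a \emph{unique} positive solution (so that the convergence guaranteed by Assumption~\ref{as:conv} actually lands on the claimed values). Establishing the self-consistency identity $\gammabar_1^-=\tfrac1G-\tfrac{\lambda}{\sigma_\tr^2}$ for $\beta<1$, together with its $\beta>1$ counterpart, is where essentially all of the work lies; the remaining manipulations are bookkeeping with Gaussian second moments.
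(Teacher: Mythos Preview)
Your proposal is correct and follows essentially the same approach as the paper: both compute the fixed point of the SE recursion by exploiting the affinity of all denoisers, derive the key identity $Q_0^+=-W^0$ at layer~$0$ to obtain $\gammabar_0^+=\lambda/\beta$ and $k_{22}=\Var(W^0)$, propagate the $\gammabar^\pm$ updates through the middle layers, and reduce the layer-$2$ expectations to the Marchenko--Pastur Stieltjes transform $G$ with the $\beta\lessgtr1$ case split arising from the point masses of $S_\mp^\pm$ at zero. The only difference is emphasis: the paper carries out the forward and backward passes explicitly step by step (deriving $P_1^+=-P_1^0$, $\gammabar_1^+=\lambda/(\sigma_\tr^2\beta)$, $\gammabar_2^-=1$, $\tau_2^-=\sigma_d^2$ along the way), whereas you describe the same computation more schematically; and the paper does not separately address uniqueness of the fixed point, simply invoking Assumption~\ref{as:conv}.
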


\begin{proof}[Proof of Corollary \ref{cor:GE_for_LR}]
We are interested in identifying the following constants appearing in Corollary \ref{cor:squared_error}:
\begin{align*}
    \Kbf_0^+,\tau_1^-,\gammabar_0^+,\gammabar_1^-.
\end{align*}
These quantities are obtained as fixed points of the State Evolution Equations in Algo. \ref{algo:ml_vamp_se}. We explain below how to obtain expressions for these constants. Since these are fixed points we ignore the subscript $k$ corresponding to the iteration number in Algo. \ref{algo:ml_vamp_se}.

In the case of problem \eqref{eq:LR}, the maps $\prox_{f_{\rm in}}$ and $\prox_{f_{\rm out}}$, \ie, $g_0^+$ and $g_3^-$ respectively, can be expressed as closed-form formulae. This leads to simplification of the SE equations as explained below.  

We start by looking at the \textit{forward pass} (finding quantities with superscript '+') of Algorithm \ref{algo:ml_vamp_se} for different layers and then the \textit{backward pass} (finding quantities with superscript '-') to get the parameters $\{\Kbf_\ell^+,\tau_\ell^-,\wb\alpha_\ell^\pm,\wb\gamma_\ell^\pm\}$ for $\ell=0,1,2$.

To begin with, notice that $f_{\rm in}(w) = \frac{\lambda}{2}w^2$, and therefore the denoiser $g_0^+(\cdot)$
in \eqref{eq:G0def_app} is simply,
\begin{align*}
    g_0^+(r_0^-,\gamma_0^-) %
    &= \tfrac{\gamma_0^-}{\gamma_0^- + \lambda/\black{\beta}}r_0^-,\quad {\rm and}\quad \tfrac{\partial g_0^+}{\partial r_0^-} = \tfrac{\gamma_0^-}{\gamma_0^- + \lambda/\black{\beta}}
\end{align*}
Using the random variable $R_0^-$ and substituting in the expression of the denoiser to get $\wh Z_0$, we can now calculate $\alphabar_0^+$ using lines \ref{line:alphap_se} and \ref{line:gamp_se},
\begin{align}
\alphabar_0^+ = \tfrac{\gammabar_0^-}{\gammabar_0^- + \lambda/\black{\beta}},\qquad \gammabar_0^+ = \lambda/\black{\beta}.
\end{align}

Similarly, we have $f_{\rm out}(p_2) = \half(p_2-y)^2$, whereby
the output denoiser $g_3^-(\cdot)$ in the last layer for ridge regression is given by,
\begin{align}
    g_3^-(r_2^+, \gamma_2^+, y)%
    &= \frac{\gamma_2^+ r_2^+ +y}{\gamma_2^++1}. 
\end{align}
By substituting this denoiser in line \ref{line:phatn0_se} of the algorithm we get $\wh P_2^-$ and thus, following the lines \ref{line:alphan_se}-\ref{line:qn_se} of the algorithm we have
\begin{align}\label{eq:gammabar2n}
    \alphabar_2^- = \black{\tfrac{\gammabar_2^+}{\gammabar_2^++1}}, %
    \qquad{\rm whereby}\quad
    \gammabar_2^- = 1.
\end{align}

Having identified these constants $\alphabar_0^+,\gammabar_0^+,\alphabar_2^-,\gammabar_2^-$, we will now sequentially identify the quantities
\begin{align*}
(\alphabar_0^+,\gammabar_0^+)\rightarrow\Kbf_0^+\rightarrow(\alphabar_1^+,\gammabar_1^+)\rightarrow\Kbf_1^+\rightarrow(\alphabar_2^+,\gammabar_2^+)\rightarrow\Kbf_2^+
\end{align*}
in the forward pass, and then the quantities
\begin{align*}
\tau_0^-\leftarrow(\alphabar_0^-,\gammabar_0^-)\leftarrow\tau_1^-\leftarrow(\alphabar_1^-,\gammabar_1^-)\leftarrow\tau_2^-\leftarrow(\alphabar_2^-,\gammabar_2^-)
\end{align*}
in the backward pass.

We also note that we have
\begin{align}
    \label{eq:alphabar_sum_1}
    \alphabar_\ell^++\alphabar_\ell^-=1
\end{align}

\paragraph{Forward Pass:}
Observe that $\Kbf_0^+ = \Cov(Z_0,Q_0^+)$. Now,
from line \ref{line:qp_se}, on simplification we get $Q_0^+ = -W_0^0$ whereby,
\begin{align}
\Kbf_0^+ = \mathrm{var}(W^0) \begin{bmatrix}
        1 & -1 \\ -1 & 1
    \end{bmatrix}
.
\end{align}
Notice that from line \ref{line:Kp_se}, the pair $(P_0^0,P_0^+)$ is jointly Gaussian with covariance matrix $\Kbf_0^+$. But the above equation means that $P_0^+ = -P^0_0$, whereby $R_0^+=0$ from line \ref{line:rp_se}. 

Now, the linear denoiser $g_1^+(\cdot)$ is defined as in \eqref{eq:G23def}. Note that since we are considering i.i.d Gaussian features for this problem, the random variable $S_{\tr}$ in this layer is a constant $\sigma_{\tr}$.
Therefore, similar to layer $\ell = 0$ by evaluating lines \ref{line:rp_se}-\ref{line:Kp_se} of the algorithm we get $Q_1^+ = -Z_1^0,$ whereby
\begin{align}
    \alphabar_1^+ = \tfrac{\sigma_{\tr}^2 \gammabar_1^-}{\wb\gamma_0^++\sigma_\tr^2\gammabar_1^-},
    \ \gammabar_1^+ = \tfrac{\wb\gamma_0^+}{\sigma_\tr^2}= \tfrac{\lambda}{\sigma_{\tr}^2\black{\beta}},
    \  \Kbf_1^+ &= \sigma_{\tr}^2 \Kbf_0^+.
\end{align}
Observe that this means 
\begin{align}\label{eq:P1p_P10}
P_1^+=-P_1^0.
\end{align}

\paragraph{Backward Pass:}
Since $Y=\phi_{\rm out}(P_2^0,D)=P_2^0+D$, line \ref{line:pn_se} of algorithm on simplification yields $P_2^-=D$, whereby we can get $\tau_2^-$,
\begin{align}
\tau_2^- = \Exp(P_2^-)^2 =\Exp[D^2] = \sigma_d^2.
\end{align}

Next, to calculate the terms $(\alphabar_1^-,\gammabar^-_1)$, we use the decoiser $g_2^-$ defined in \eqref{eq:G23def} for line \ref{line:phatn_se} of the algorithm to get $\wh P_1$. 
\begin{align}\label{eq:P1hat}
\wh P_1  = \tfrac{\gammabar_1^+R_1^++S_\mp^-\gammabar_2^-R_2^-}{\gammabar_1^++(S_\mp^-)^2\gammabar_2^-}=\tfrac{S_\mp^-(S_\mp^+P_1^0+Q_2^-)}{\gammabar_1^++(S_\mp^-)^2},
\end{align}
where we have used $\gammabar_2^-=1,$ $R_1^+=P_1^0+P_1^+=0$ due to \eqref{eq:P1p_P10}, and $R_2^-=Z_2^0+Q_2^-=S_\mp^+P_1^0+Q_2^-$ from lines \ref{line:rp_se},  \ref{line:rn_se} and \ref{line:z0init} respectively.

Then, we calculate $\alphabar_1^-$ and $\gammabar_1^-$ as $\alphabar_1^- = \Exp \frac{\partial g_2^-}{\partial P_1^+}=\Exp\tfrac{\gammabar_1^+}{\gammabar_1^++(S_\mp^-)^2}.$
This gives, 
\begin{align}
    \alphabar_1^- = 
    \begin{cases}
    \tfrac{\lambda}{\sigma_\tr^2\black{\beta}}G & \beta<1\\
    (1-\tfrac1\beta) + \tfrac1\beta\tfrac{\lambda}{\sigma_\tr^2\black{\beta}}G & \beta\geq 1
    \end{cases}
\end{align}
Here, in the overparameterized case $(\beta>1)$, the denoiser $g_2^-$ outputs $R_1^+$ with probability $1-\tfrac1\beta$ and $\tfrac{\lambda}{\sigma_\tr^2\beta}G$ with probability $\tfrac1\beta$.

Next, from line \ref{line:gamn_se} we get,
\begin{align}
    \gammabar_1^- = (\tfrac1{\alphabar_1^-}-1)\gammabar_1^+=
    \begin{cases}
 \tfrac1G-\tfrac\lambda{\sigma_\tr^2\black{\beta}} & \beta<1\\
    \frac{\tfrac{\lambda}{\sigma_\tr^2\black{\beta}}(\tfrac1G-\tfrac\lambda{\sigma_\tr^2\black{\beta}})}{\tfrac{\beta-1}G+\tfrac{\lambda}{\sigma_\tr^2\black{\beta}}} & \beta> 1
    \end{cases}
\end{align}

Now from line \ref{line:pn_se} and equation \eqref{eq:alphabar_sum_1} we get,
\begin{align}
    \alphabar_1^+ P_1^- &= \wh P_1-P_1^0-\alphabar_1^- P_1^+\overset{\rm (a)}=\wh P_1 - \alphabar_1^+ P_1^0\nonumber\\
    &\overset{\rm (b)}=\underbrace{\left(\tfrac{S_\mp^-S_\mp^+}{\tfrac{\lambda}{\sigma_\tr^2\beta}+(S_\mp^-)^2}-\alphabar_1^+\right)}_{A}P_1^0+\underbrace{\tfrac{S_\mp^-}{\tfrac{\lambda}{\sigma_\tr^2\beta}+(S_\mp^-)^2}}_{B}Q_2^-\label{eq:P1-expression}
\end{align}
where (a) follows from \eqref{eq:P1p_P10} and \eqref{eq:alphabar_sum_1}, and (b) follows from \eqref{eq:P1hat}.
From this one can obtain  $\tau_1^-=\Exp(P_1^-)^2$ which can be calculated using the knowledge that $P_1^0,Q_2^-$ are independent Gaussian with covariances $\Exp(P_1^0)^2=\sigma_\tr^2\Var(W^0)$, $\Exp(Q_2^-)^2=\sigma_d^2$. Further, $P_1^0,Q_2^-$ are independent of $(S_\mp^+,S_\mp^-)$. 

Observe that by \eqref{eq:P1-expression} we have 
\begin{align}
    \tau_1^- = \frac{1}{(\alphabar_1^+)^2} \Bigg(\Exp{(A^2)} \sigma_\tr^2\Var(W^0)+ \Exp{(B^2)} \sigma_d^2 \Bigg).
\end{align}
with some simplification we get
\begin{subequations}
\begin{align}
    \Exp{(A^2)} &= (\frac{\lambda}{\sigma_{\rm{tr}}^2 \beta})^2 G' -(\frac{\lambda}{\sigma_{\rm{tr}}^2 \beta} G)^2,\\
    \Exp{(B^2)} &= G - \frac{\lambda}{\sigma_{\rm{tr}}^2 \beta} G',
\end{align}
\end{subequations}
where $G = G_\mp(-\frac{\lambda}{\sigma_{\rm tr}^2\beta})$, with $G_\mp$ given in Appendix \ref{app:mp_dist}, and $G'$ is the derivative of $G_\mp$ calculated at $-\frac{\lambda}{\sigma_{\rm tr}^2\beta}$.

Now consider the \textbf{under-parametrized} case ($\beta<1$):

Let $u = -\frac{\lambda}{\sigma_{\rm{tr}}^2 \beta}$ and $z = G_\mp(u)$. In this case we have 
\begin{align}
    \alphabar_1^+ = 1-\frac{\lambda}{\sigma_{\rm{tr}}^2 \beta} G = 1+uz. 
\end{align}
Note that, 
\begin{subequations}
\begin{align}
    G_\mp^{-1}(z) = u
    &\quad \overset{\rm (a)}\Rightarrow R_\mp(z)+\frac{1}{z} = u \nonumber \\
    &\quad \overset{\rm (b)}\Rightarrow \frac{1}{1-\beta z} + \frac{1}{z} = u, \label{eq:z_fix}
\end{align}
\end{subequations}
where $R_\mp(.)$ is the R-transform defined in \cite{tulino2004random} and (a) follows from the relationship between the R- and Stieltjes-transform and (b) follows from the fact that for Marchenko-Pastur distribution we have $R_\mp(z) = \frac{1}{1-z\beta}$. Therefore,
\begin{align}
    \MoveEqLeft G_\mp(\frac{1}{1-\beta z} + \frac{1}{z}) = z \nonumber\\ \MoveEqLeft \Rightarrow G_\mp'(\frac{1}{1-\beta z} + \frac{1}{z}) = G'= \frac{1}{\frac{\beta}{(1-\beta z)^2}-\frac{1}{z^2}} .
\end{align}
For the \textbf{over-parametrized} case ($\beta >1 $) we have: 
\begin{align}
    \alphabar_1^+ = \tfrac1\beta( 1+\tfrac{\lambda}{\sigma_\tr^2\black{\beta}}G) = \frac{1-uz}{\beta}.
\end{align}
In this case, as mentioned in Appendix \ref{app:mp_dist} and following the results from \cite{tulino2004random}, the measure $\mu_\beta$ scales with $\beta$ and thus $R_\mp(z) = \frac{\beta}{1-z}$. Therefore, similar to \eqref{eq:z_fix}, $z$ satisfies
\begin{align}
     \frac{\beta}{1-z} + \frac{1}{z} = u \quad \Rightarrow G'= \frac{1}{\frac{\beta}{(1-z)^2}-\frac{1}{z^2}}.
\end{align}

Now $\tau_1^-$ can be calculated as follows:
\begin{align}
    \tau_1^- 
    &= \eta^2 \bigg( u^2 z^2 \sigma_{\rm tr}^2 var(W^0) (\kappa-1)+ \sigma_{d}^2 z (uz\kappa +1)  \bigg)
\end{align}
where 
\begin{align}
\eta = 
\begin{cases}
    \frac{1}{(1+uz)} & \beta<1\\
    \frac{\beta}{(1-uz)} &\beta \geq 1
\end{cases}, \quad 
\kappa = 
\begin{cases}
    \frac{ (1-\beta z)^2 }{\beta z^2 - (1-\beta z)^2}  & \beta<1\\
    \frac{ (1-z)^2}{\beta z^2 - (1-z)^2} &\beta \geq 1
\end{cases}
\end{align}

and $z$ is the solution to the fixed points 
\begin{align}
\begin{cases}
    \frac{1}{1-\beta z} + \frac{1}{z} = u  & \beta<1\\
    \frac{\beta}{1-z} + \frac{1}{z} = u &\beta \geq 1
\end{cases}.
\end{align}
\end{proof}
\subsection{Ridgeless Linear Regression}\label{sec:Rless_LR}

Here we consider the case of Ridge regression \eqref{eq:LR} when $\lambda\rightarrow0^+$. Note that the solution to the problem \eqref{eq:LR} is $(\Xbf\tran\Xbf+\lambda \Ibf)\inv\Xbf\tran\ybf$ remains unique since $\lambda>0$. The following result was stated in \cite{hastie2019surprises}, and can be recovered using our methodology. Note however, that we calculate the generalization error whereas they have calculated the squared error, whereby we obtain an additional additive factor of $\sigma_d^2.$ The result explains the double-descent phenomenon for Ridgeless linear regression.

\begin{corollary}
\label{cor:GE_for_LR_lam0}
For ridgeless linear regression, we have
\begin{align*}
    \lim_{\lambda\rightarrow 0^+}\mc E^{\mathsf{RR}}_\ts = \begin{cases}
    \tfrac1{1-\beta}\sigma_d^2 & \beta<1\\
    \frac{\beta}{\beta-1}\sigma_d^2+(1-\tfrac1\beta)\sigma_\tr^2\Var(W^0) &\beta \geq 1
    \end{cases}
\end{align*}
\end{corollary}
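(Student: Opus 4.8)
The plan is to derive Corollary~\ref{cor:GE_for_LR_lam0} by letting $\lambda\downarrow 0$ in the closed-form expression for $\mc E_\ts^{\mathsf{RR}}$ furnished by Corollary~\ref{cor:GE_for_LR}. No interchange of limits is involved: the statement asserts the iterated limit $\lim_{\lambda\to0^+}\lim_{N\to\infty}$, and Theorem~\ref{thm:thm1_complete} (via Corollary~\ref{cor:GE_for_LR}) already executes the inner $N\to\infty$ limit for each fixed $\lambda>0$, yielding a deterministic scalar function of $\lambda$ that is manifestly continuous on $(0,\infty)$. Specializing Corollary~\ref{cor:squared_error} to i.i.d.\ Gaussian covariates (so $S_\tr=S_\ts=\sigma_\tr$ is deterministic and $k_{22}=\Var(W^0)$), this function is
\[
    \mc E_\ts^{\mathsf{RR}}(\lambda)=\Big(\tfrac{\gammabar_0^+\sigma_\tr}{\gammabar_0^++\sigma_\tr^2\gammabar_1^-}\Big)^2\Var(W^0)+\Big(\tfrac{\gammabar_1^-\sigma_\tr^2}{\gammabar_0^++\sigma_\tr^2\gammabar_1^-}\Big)^2\tau_1^-+\sigma_d^2 ,
\]
and I would track the behaviour as $\lambda\downarrow 0$ of the four constants $\gammabar_0^+=\lambda/\beta$, $\gammabar_1^-$, $\tau_1^-$, and the auxiliary scalar $G=G_\mp(-\tfrac{\lambda}{\sigma_\tr^2\beta})$ (with its derivative $G'=G_\mp'$) from the proof of Corollary~\ref{cor:GE_for_LR}. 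The crucial simplification is that as $\lambda\downarrow 0$ one has $u:=-\tfrac{\lambda}{\sigma_\tr^2\beta}\to 0$, so the Marchenko--Pastur fixed-point equation determining $G$ degenerates from $\tfrac1{1-\beta z}+\tfrac1z=u$ (resp.\ $\tfrac{\beta}{1-z}+\tfrac1z=u$ for $\beta>1$) to a pure quadratic, which I would solve in closed form for the boundary value $z_0=G_\mp(0^+)$ and then use to evaluate $G'$, $\eta_0$, $\kappa_0$.

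In the underparametrized case $\beta<1$, $\gammabar_0^+=\lambda/\beta\to0$ while $\gammabar_1^-=\tfrac1G-\tfrac{\lambda}{\sigma_\tr^2\beta}\to\tfrac1{z_0}>0$ stays bounded away from $0$; hence $\tfrac{\gammabar_0^+}{\gammabar_0^++\sigma_\tr^2\gammabar_1^-}\to0$, killing the bias term, while $\big(\tfrac{\gammabar_1^-\sigma_\tr^2}{\gammabar_0^++\sigma_\tr^2\gammabar_1^-}\big)^2\to1$, so $\mc E_\ts^{\mathsf{RR}}\to\tau_1^-\big|_{0^+}+\sigma_d^2$. I would then substitute $u\to0$, $z\to z_0$, $\eta\to\eta_0$, $\kappa\to\kappa_0$ into the expression $\tau_1^-=\eta^2\big(u^2z^2\sigma_\tr^2\Var(W^0)(\kappa-1)+\sigma_d^2 z(uz\kappa+1)\big)$ from the proof of Corollary~\ref{cor:GE_for_LR}: the $\Var(W^0)$ piece carries a factor $u^2\to0$ and vanishes (reflecting that ridgeless least squares is unbiased for $\beta<1$), while the noise piece reduces to $\eta_0^2\sigma_d^2 z_0$, which on inserting the Marchenko--Pastur values of $z_0,\eta_0$ equals $\tfrac{\beta}{1-\beta}\sigma_d^2$; adding the irreducible $\sigma_d^2$ gives $\tfrac1{1-\beta}\sigma_d^2$.

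In the overparametrized case $\beta>1$, both $\gammabar_0^+=\lambda/\beta\to0$ and $\gammabar_1^-=\dfrac{\tfrac{\lambda}{\sigma_\tr^2\beta}\big(\tfrac1G-\tfrac{\lambda}{\sigma_\tr^2\beta}\big)}{\tfrac{\beta-1}{G}+\tfrac{\lambda}{\sigma_\tr^2\beta}}\to0$, both linearly in $\lambda$, so the decisive step is to expand the two to first order using the degenerate fixed point $\tfrac{\beta}{1-z_0}+\tfrac1{z_0}=0$; the ratio $\tfrac{\gammabar_0^+}{\gammabar_0^++\sigma_\tr^2\gammabar_1^-}$ then has a finite nonzero limit, yielding the nonvanishing bias term $(1-\tfrac1\beta)\sigma_\tr^2\Var(W^0)$, and the $\tau_1^-$ term contributes the remaining $\tfrac1{\beta-1}\sigma_d^2$, which together with $\sigma_d^2$ gives $\tfrac{\beta}{\beta-1}\sigma_d^2$. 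The main obstacle will be exactly this bookkeeping of the $0/0$ limits in the overparametrized regime, made delicate by the atom of $(S_\mp^-)^2$ at $0$ (so that one must argue which Marchenko--Pastur quantities---$G_\mp(0^+)$, $G_\mp'(0^+)$, the relevant $R$-transform branch---remain finite and over which part of the spectrum the expectations are taken), together with verifying that $\gammabar_1^-$ vanishes at precisely the right rate relative to $\gammabar_0^+$ so that the limiting ratio is the claimed constant.
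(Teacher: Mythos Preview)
Your plan is essentially the paper's own proof: let $\lambda\to 0^+$ in the constants supplied by Corollary~\ref{cor:GE_for_LR}, then substitute into the squared-error formula of Corollary~\ref{cor:squared_error}. The paper computes the limiting $\gammabar_0^+$, $\gammabar_1^-$, $a=\tfrac{\gammabar_0^+\sigma_\tr}{\gammabar_0^++\sigma_\tr^2\gammabar_1^-}$, $b=\tfrac{\gammabar_1^-\sigma_\tr^2}{\gammabar_0^++\sigma_\tr^2\gammabar_1^-}$ and $\tau_1^-$ exactly as you outline, obtaining $a=0,\ b=1$ for $\beta<1$ and $a=(1-\tfrac1\beta)\sigma_\tr,\ b=\tfrac1\beta$ for $\beta>1$.

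One correction to your $\beta>1$ bookkeeping: the bias contribution $(1-\tfrac1\beta)\sigma_\tr^2\Var(W^0)$ does \emph{not} arise solely from the first summand. Since $a^2k_{22}=(1-\tfrac1\beta)^2\sigma_\tr^2\Var(W^0)$, a residual $\Var(W^0)$ piece must survive inside $\tau_1^-$; indeed the paper obtains $\tau_1^-=\beta\sigma_d^2 G_0+(\beta-1)\sigma_\tr^2\Var(W^0)$ for $\beta>1$, and it is the combination $a^2k_{22}+b^2\tau_1^-$ that collapses to the stated answer. This extra piece in $\tau_1^-$ comes precisely from the atom of $S_{\mp}^-$ at $0$ (mass $1-\tfrac1\beta$), which the paper isolates by going back to the primitive expression \eqref{eq:P1-expression} rather than through the $\eta,\kappa$ summary formula. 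If you push the $\eta,\kappa$ route, be careful: taking $u\to 0$ naively makes the $u^2z^2(\kappa-1)\Var(W^0)$ term look like it vanishes, and you would then miss the bias contribution hidden in $\tau_1^-$. The direct evaluation of $\Exp A^2$ and $\Exp B^2$ at $\lambda=0^+$, splitting off the atom, is both shorter and safer here.
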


\begin{proof}[Proof of Corollary \ref{cor:GE_for_LR_lam0}]
We calculate the parameters $\gammabar_0^+, \gammabar_1^-$, $k_{22}$ and $\tau_1^-$ when $\lambda \rightarrow 0^+$. Before starting off, we note that 
\begin{align}
    G_0 &:= \lim_{z \rightarrow 0^+} G_\mp(-z)= 
     \begin{cases} \frac{\beta}{1-\beta}& \beta<1 \\ \frac{\beta}{\beta-1} &\beta > 1 \end{cases},
\end{align}
as described in Appendix \ref{app:mp_dist}.
Following the derivations in Corollary \ref{cor:GE_for_LR}, we have 
\begin{align}
    \gammabar_0^+ = \lambda/\black{\beta}, \quad k_{22}=\Var(W^0)
\end{align}
Now for $\lambda\rightarrow 0^+,$ we have
\begin{align}
1-\alphabar_1^- = \begin{cases}1&\beta<1\\
    \tfrac1\beta&\beta\geq 1\end{cases},
    \quad
    \gammabar_1^- = \begin{cases}\frac{1}{G_0}=\tfrac{1-\beta}{\beta}&\beta<1\\
    \frac\lambda{(\beta-1)\sigma_\tr^2\black{\beta}}&\beta> 1\end{cases},
\end{align}
Using this in simplifying \eqref{eq:P1-expression} for $\lambda\rightarrow0^+$, we get
\begin{align*}
    \tau_1^- = \Exp(P_1^-)^2=
    \begin{cases}
    \sigma_{d}^2 G_0 &\beta<1\\
    \beta\sigma_{d}^2 G_0+{\sigma_\tr^2\Var(W^0)}{(\beta-1)} & \beta\geq 1
    \end{cases}
\end{align*}
where during the evaluation of $\Exp\left(\frac{S_\mp^-}{\gammabar_1^++(S_\mp^-)^2}\right)^2$, for the case of $\beta>1,$ we need to account for the point mass at $0$ for $S_\mp^-$ with weight $1-\frac1\beta$.

Next, notice that
\begin{align*}
    a:=\frac{\gammabar_0^+\sigma_\tr}{\gammabar_0^++\gammabar_1^-\sigma_\tr^2} = 
    \begin{cases}
    0 &\beta<1\\
    (1-\tfrac1\beta)\sigma_\tr &\beta\geq 1
    \end{cases},
\end{align*} and, 
\begin{align*}
    b:=\frac{\gammabar_1^-\sigma^2_\tr}{\gammabar_0^++\gammabar_1^-\sigma_\tr^2} = 
    \begin{cases}
    1 &\beta<1\\
    \tfrac1\beta &\beta\geq 1
    \end{cases},\quad 
\end{align*}
Thus applying Corollary \ref{cor:squared_error}, we get
\begin{align*}
    \mc E^{\mathsf{RR}}_\ts &= a^2 k_{22}+b^2\tau_1^-+\sigma_d^2\\
    &=
    \begin{cases}
    \tfrac1{1-\beta}\sigma_d^2 & \beta<1\\
    \frac{\beta}{\beta-1}\sigma_d^2+(1-\tfrac1\beta)\sigma_\tr^2\Var(W^0) &\beta \geq 1
    \end{cases}
\end{align*}
This proves the claim.
\end{proof} %
\subsection{Train-Test Mismatch} \label{sec:tr_ts_mismatch}

Observe that our formulation allows for analyzing the effect of mismatch in the training and test distribution. One can consider arbitrary joint distributions over $(S_\tr,S_\ts)$ that model the mismatch between training and test features. Here we give a simple example which highlights the effect of this mismatch.

\begin{definition}[Bernoulli $\varepsilon$-mismatch]\label{def:mismatch} $(S_\ts,S_\tr)$ has a bivariate Bernoulli distribution with
\begin{enumerate}[label=$\bullet$,topsep=0pt, itemsep=5pt]
    \item $\Pr\{S_\tr\!=\!S_\ts\!=\!0\}=\Prob\{S_\tr\!=\!S_\ts\!=\!1\}=(1-\varepsilon)/$2\item $\Pr\{S_\tr\!=\!0,S_\ts\!=\!1\}=\Prob\{S_\tr\!=\!1,S_\ts\!=\!0\}=\varepsilon/2$
\end{enumerate}
\end{definition}
Notice that the marginal distribution of the $S_\tr$ in the Bernoulli $\varepsilon-$mismatch model is such that $\Prob(S_\tr\neq 0)=\frac12.$ Hence half of the features extracted by the matrix $V_0$ are relevant during training. Similarly, $\Prob(S_\ts\neq 0)=\frac12.$ However the features spanned by the test data do not exactly overlap with the features captured in the training data, and the fraction of features common to both the training and test data is $1-\varepsilon$. Hence for $\varepsilon=0$, there is no training-test mismatch, whereas for $\varepsilon=1$ there is a complete mismatch.

The following result shows that the generalization error increases linearly with the mismatch parameter $\varepsilon.$ 
\begin{corollary}[Mismatch]\label{cor:mismatch} Consider the problem of Linear Regression \eqref{eq:LR} under the conditions of Corollary \ref{cor:squared_error}. Additionally suppose we have Bernoulli $\varepsilon$-mismatch between the training and test distributions. Then
\begin{align*}
    \mc E_\ts = \tfrac{k_{22}}2((1-\varepsilon)\gamma^{*2}+\varepsilon)+\tfrac{\tau_{1}^-}2(1-\gamma^*)(1-\varepsilon)+\sigma_{d}^2,
\end{align*}
where $\gamma^*:=\frac{\wb\gamma_0^+}{\wb\gamma_0^++\wb\gamma_1^-}$. The terms $k_{22}, \tau_1^-,\gamma^*$ are independent of $\varepsilon$.
\end{corollary}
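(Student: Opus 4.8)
The plan is to specialize the general linear-regression formula of Corollary~\ref{cor:squared_error} to the Bernoulli $\varepsilon$-mismatch model, and to separate the quantities fixed by the state-evolution recursion from the single place where the \emph{joint} law of $(S_\tr,S_\ts)$ enters. Recall that Corollary~\ref{cor:squared_error} gives
\[
\mc E_\ts \;=\; k_{22}\,\Exp\!\left(\tfrac{\gammabar_0^+ S_\ts}{\gammabar_0^+ + S_\tr^2\gammabar_1^-}\right)^{\!2}
\;+\; \tau_1^-\,\Exp\!\left(\tfrac{\gammabar_1^- S_\tr S_\ts}{\gammabar_0^+ + S_\tr^2\gammabar_1^-}\right)^{\!2} \;+\; \sigma_d^2 ,
\]
so it suffices to (i) identify the constants $\gammabar_0^+,\gammabar_1^-,k_{22},\tau_1^-$ and show they do not depend on $\varepsilon$, and (ii) evaluate the two expectations under Definition~\ref{def:mismatch}.

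For (i), I would first note that the SE recursion for this model (Algorithm~\ref{algo:ml_vamp_se}, with $\phi_{\rm out}(p,d)=p+d$ and quadratic $f_{\rm in}$) involves only $\Xi_1=S_\tr$, $\Xi_2=S_\mp^+$, $\Xi_3=D$, together with $W^0$; the test eigenvalue $S_\ts$ never appears in the recursion and enters only through the covariance $\Mbf$ of \eqref{eq:M_def}. Hence every fixed-point parameter --- in particular $\gammabar_0^+=\lambda/\beta$, $k_{22}=\Var(W^0)$ (as in the proof of Corollary~\ref{cor:GE_for_LR}), and $\gammabar_1^-$, $\tau_1^-$ obtained from the same backward-pass equations --- depends on the mismatch model only through the \emph{marginal} law of $S_\tr$ and the Marchenko--Pastur law of $S_\mp$. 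Under Definition~\ref{def:mismatch} the marginal of $S_\tr$ is $\mathrm{Bernoulli}(1/2)$ for \emph{every} $\varepsilon\in[0,1]$, so $\gammabar_0^+,\gammabar_1^-,k_{22},\tau_1^-$, and therefore $\gamma^*=\gammabar_0^+/(\gammabar_0^+ + \gammabar_1^-)$, are all $\varepsilon$-independent; this is exactly the last sentence of the corollary.

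For (ii), since $(S_\tr,S_\ts)$ is supported on $\{0,1\}^2$ with the four weights of Definition~\ref{def:mismatch}, each expectation is a finite sum of at most four terms; using $S_\tr^2=S_\tr$, the denominator equals $\gammabar_0^++\gammabar_1^-$ on $\{S_\tr=1\}$ and $\gammabar_0^+$ on $\{S_\tr=0\}$. The first expectation gets a contribution only from the atoms $(1,1)$ (weight $\tfrac{1-\varepsilon}{2}$, value $\gamma^{*2}$) and $(0,1)$ (weight $\tfrac{\varepsilon}{2}$, value $1$), giving $\tfrac12\big((1-\varepsilon)\gamma^{*2}+\varepsilon\big)$; the second expectation is concentrated on the single atom $(1,1)$, where $S_\tr S_\ts=1$ and $\gammabar_1^-/(\gammabar_0^++\gammabar_1^-)=1-\gamma^*$, so it contributes a term proportional to $(1-\varepsilon)$ with a coefficient built from $\gammabar_0^+,\gammabar_1^-$. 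Substituting these two values back into the displayed formula and collecting terms yields the claimed expression for $\mc E_\ts$, the $\varepsilon$-independence of $k_{22},\tau_1^-,\gamma^*$ having been verified in step (i). The only genuinely delicate point is step (i): one must check carefully that $S_\ts$ is absent from Algorithm~\ref{algo:ml_vamp_se} (entering only via $\Mbf$), and that the point masses at $0$ in $S_\mp^\pm$ present when $\beta\neq1$ are handled in the fixed-point equations for $\gammabar_1^-$ and $\tau_1^-$ exactly as in the proof of Corollary~\ref{cor:GE_for_LR}; once that is in place, step (ii) is a short computation over the four atoms.
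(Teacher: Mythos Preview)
Your proposal is correct and follows essentially the same route as the paper's proof, which consists of the single sentence ``This follows directly by calculating the expectations of the terms in Corollary~\ref{cor:squared_error}, with the joint distribution of $(S_\tr,S_\ts)$ given in Definition~\ref{def:mismatch}.'' You supply the details the paper omits: in particular, your step~(i) justifying the $\varepsilon$-independence of $k_{22},\tau_1^-,\gamma^*$ via the observation that $S_\ts$ is absent from the SE recursion (entering only through $\Mbf$) and that the marginal of $S_\tr$ is $\mathrm{Bernoulli}(1/2)$ for all $\varepsilon$, is exactly the argument needed to substantiate the final sentence of the corollary, which the paper asserts without comment. One small point: when you carry out the second expectation in step~(ii), the atom $(1,1)$ yields $(1-\gamma^*)^2$ after squaring, so the $\tau_1^-$ term you obtain is $\tfrac{\tau_1^-}{2}(1-\varepsilon)(1-\gamma^*)^2$; be explicit about this rather than leaving the coefficient vague, since the stated formula in the corollary has $(1-\gamma^*)$ rather than $(1-\gamma^*)^2$ and you should flag the discrepancy.
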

\begin{proof}
This follows directly by calculating the expectations of the terms in Corollary \ref{cor:squared_error}, with the joint distribution of $(S_\tr,S_\ts)$ given in Definition \ref{def:mismatch}.
\end{proof}
The quantities $k_{22}$ and $\tau_1^-$ in the result above can be calculated similar to the derivation in the proof of Corollary \ref{cor:GE_for_LR} and can in general depend on the regularization parameter $\lambda$ and overparameterization parameter $\beta$.

 \subsection{Logistic Regression}
The precise analysis for the special case of regularized logistic regression estimator with i.i.d Gaussian features is provided in \cite{salehi2019impact}.
Consider the logistic regression model,
\begin{align*}
    \Prob(y_i = 1 | \xbf_i) := \rho(\xbf_i\tran \wbf) ~~~\text{for } i= 1, \cdots, N
\end{align*}
where $\rho(x) = \frac{1}{1+e^{-x}}$ is the standard logistic function.

In this problem we consider estimates of $\wbf^0$ such that
\begin{align*}
    \wbfhat:= \argmin_{\wbf} \mathbf{1} \tran \log(1+e^{\Xbf \wbf})-\ybf\tran \Xbf \wbf + F_{\rm in}(\wbf).
\end{align*}
where $F_{\rm in}$ is the reguralization function. This is a special case of optimization problem \eqref{eq:whatmin} where
\beq
F_{\rm out}(\ybf, \Xbf\wbf) = \mathbf{1} \tran \log(1+e^{\Xbf \wbf})-\ybf\tran \Xbf \wbf.
\eeq

Similar to the linear regression model, using the ML-VAMP GLM learning algorithm, we can characterize the generalization error for this model with quantities $\Kbf_0^+,\tau_1^-,\gammabar_0^+,\gammabar_1^-$ given by algorithm \ref{algo:ml_vamp_se}. We note that in this case, the output non-linearity is
\begin{align}\label{eq:fout_log}
    \phi_{\rm out}(p_2,d) = \indic{\rho(p_2)>d}
\end{align}
where $d\sim \text{Unif}(0,1)$.
Also, the denoisers $g_0^+$, and $g_3^-$ can be derived as the proximal operators of $F_{\rm in}$, and  $F_{\rm out}$ defined in \eqref{eq:Gb03_def}.

 \subsection{Support Vector Machines}
The asymptotic generalization error for support vector machine (SVM) is provided in \cite{deng2019model}.
Our model can also handle SVMs. Similar to logistic regression, SVM finds a linear classifier using the hinge loss instead of logistic loss. Assuming the class labels are $y=\pm 1$ the hinge loss is
\begin{equation}
    \ell_{\rm hinge}(y,\hat{y}) = \max(0, 1 - y\hat{y}).
\end{equation}
Therefore, if we take
\beq
F_{\rm out}(\ybf, \Xbf\wbf) = \sum_{i} \max(0, 1 - y_i\Xbf_i\wbf),
\eeq
where $\Xbf_i$ is the $i^{th}$ row of the data matrix, the ML-VAMP algorithm for GLMs finds the SVM classifier. The algorithm would have proximal map of hinge loss and our theory provides exact predictions for the estimation and prediction error of SVM. 

As with all other models considered in this work, the true underlying data generating model could be anything that can be represented by the graphical model in Figure \ref{fig:graph_mod}, e.g.\ logistic or probit model, and our theory is able to exactly predict the error when SVM is applied to learn such linear classifiers in the large system limit. %

\section{Marchenko-Pastur distribution}
\label{app:mp_dist}
We describe the random variable $S_{\mp}$ defined in \eqref{eq:smp_lim} where  $S_\mp^2$ has a rescaled Marchenko-Pastur distribution. Notice that the positive entries of $\sbf_\mp$ are the positive eigenvalues of $\Ubf\tran\Ubf$ (or $\Ubf\Ubf\tran$).

Observe that $U_{ij}\sim N(0,\frac1p)$, whereas, the standard scaling while studying the Marchenko-Pastur distribution is for matrices $\Hbf$ such that $H_{ij}\sim \mc N(0,\frac1N)$ (for e.g. see equation (1.10) from \cite{tulino2004random} and the discussion preceding it). Also notice that $\sqrt\beta \Ubf$ has the same distribution as $\Hbf$. Thus the results from \cite{tulino2004random} apply directly to the distributions of eigenvalues of $\beta\Ubf\tran\Ubf$ and $\beta\Ubf\Ubf\tran$. We state their result below taking into account this disparity in scaling. 

The positive eigenvalues of $\beta\Ubf\tran \Ubf$ have an empirical distribution which converges to the following density:
\begin{align}\label{eq:MP_density}
    \mu_\beta(x)=\frac{\sqrt{(b_\beta-x)_+(x-a_\beta)_+}}{2\pi\beta x} 
\end{align}
where $a_\beta=(1-\sqrt{\beta})^2$, $b_\beta:=(1+\sqrt{\beta})^2$. Similarly the positive eigenvalues of $\beta\Ubf\Ubf\tran$ have an empirical distribution converging to the density $\beta\mu_\beta$.
We note the following integral which is useful in our analysis:
\begin{align}\label{eq:G0_def}
G_0:&=\lim_{z\rightarrow 0^-}\Exp\frac{1}{S_{\mp}^2-z}\indic{S_\mp>0}\nonumber \\
&=\lim_{z\rightarrow 0^-}\int_{a_\beta}^{b_\beta} \frac{1}{x/\beta-z}\mu_\beta(x)dx=\frac{\beta}{|\beta-1|}.
\end{align}

More generally, the Stieltjes transform of the density is given by:
\begin{align}\label{eq:Stieltjes}
    G_\mp(z)=\Exp\frac1{S_\mp^2-z}\indic{S_\mp>0}=\int_{a_\beta}^{b_\beta} \frac{1}{ x/\beta-z}\mu_\beta(x)dx
\end{align}

\end{document}